\tikzset{
    >=stealth',
    punkt/.style={
           rectangle,
           rounded corners,
           draw=black, very thick,
           text width=9em,
           minimum height=1.7em,
           text centered},
    pil/.style={
           ->,
           thick,
           shorten <=2pt,
           shorten >=2pt,}
}
\DeclareMathOperator*{\argmax}{arg\,max}
\DeclareMathOperator*{\argmin}{arg\,min}
\newcommand{\myitemA}{\item[(\theenumi$\vphantom{x}{a}$)]}
\newcommand{\itembf}{\item[(\stepcounter{enumi}\textbf{\arabic{enumi}})]}
\DeclareRobustCommand{\VAN}[3]{#2}
\DeclareRobustCommand{\VON}[3]{#2}
\DeclareRobustCommand{\DE}[3]{#2}
\newcommand{\ko}{\kappa}
\newcommand{\kp}{\varkappa}
\newcommand{\note}[2][Note]{\textcolor{blue}{\textbf{\large [}\colorbox{yellow}{\textbf{#1:}}{\small #2}\textbf{\large ]}}}
\renewcommand{\note}[2][]{}
\newcommand{\vsp}[1]{\vspace{#1}}
\renewcommand{\vsp}[1]{}
\begin{document}

\title{Robust Temporal Difference Learning for Critical Domains}  




%
\author{Richard Klima}
\affiliation{%
  \institution{University of Liverpool}
  \city{Liverpool} 
  \country{United Kingdom}
}
\email{richard.klima@liverpool.ac.uk}
\author{Daan Bloembergen}
\affiliation{%
  \institution{Centrum Wiskunde \& Informatica}
  \city{Amsterdam}
  \country{The Netherlands}
}
\email{d.bloembergen@cwi.nl}
\author{Michael Kaisers}
\affiliation{%
  \institution{Centrum Wiskunde \& Informatica}
  \city{Amsterdam}
  \country{The Netherlands}
}
\email{m.kaisers@cwi.nl}
\author{Karl Tuyls}
\affiliation{%
  \institution{University of Liverpool}
  \city{Liverpool} 
  \country{United Kingdom}
}
\email{karltuyls@google.com}
%
%
%
%

\begin{abstract}  
We present a new Q-function operator for temporal difference (TD) learning methods that explicitly encodes robustness against significant rare events (SRE) in critical domains. The operator, which we call the $\ko$-operator, allows to learn a robust policy in a model-based fashion without actually observing the SRE. We introduce single- and multi-agent robust TD methods using the operator $\ko$. We prove convergence of the operator to the optimal robust Q-function with respect to the model using the theory of Generalized Markov Decision Processes. In addition we prove convergence to the optimal Q-function of the original MDP given that the probability of SREs vanishes. Empirical evaluations demonstrate the superior performance of $\ko$-based TD methods both in the early learning phase as well as in the final converged stage. In addition we show robustness of the proposed method to small model errors, as well as its applicability in a multi-agent context.
\end{abstract}

\keywords{reinforcement learning; robust learning; multi-agent learning}  

\maketitle

\vsp{-1em}
\section{Introduction}

Many critical systems exhibit global system dynamics that are highly sensitive to the local performance of individual components. This holds for example for (air) traffic and transport networks, communication networks, security systems, and (smart) power grids \citep{cristian1996fault,shooman2003reliability,knight2002safety,liu2012cyber}. In each case, the failure of or malicious attack on a small set of nodes may lead to knock-on effects that can potentially destabilise the whole system. Innovations in critical systems may introduce additional vulnerabilities to such attacks: e.g., in smart grids communication channels are needed for distributed intelligent energy management strategies, while simultaneously forming a potential target that could compromise safety~\citep{yan2013survey}. Our research is motivated precisely by the need for safety in these critical systems, which can be achieved by building in robustness against rare but significant deviations caused by one or more system components failing or being compromised in an attack.

In this article we present a new approach 
for learning policies in such systems that are robust against a chosen scenario of potential attacks or failures. We accomplish this by introducing a new Q-function operator, which we call the $\ko$-operator, that encodes robustness into the bootstrapping update of traditional temporal difference (TD) learning methods. In particular, we design the operator to encode the possibility of significant rare events (SREs) without requiring the learning agent to observe such events in training. Although the $\ko$-operator is model-based with respect to these SREs, it can be combined with any TD method and can thus still be model-free with respect to the environment dynamics.

We prove convergence of our methods to the optimal robust Q-function with respect to the model using the theory of Generalized Markov Decision Processes. In addition we prove convergence to the optimal Q-function of the original MDP given that the probability of SREs vanishes. Empirical evaluations demonstrate the superior performance of $\ko$-based TD methods both in the early learning phase as well as in the final converged stage. In addition we show robustness of the proposed method to small model errors, as well as its applicability to multi-agent joint-action learning.


This articles proceeds with related work in Section~\ref{sec:related} and background concepts in Section~\ref{sec:background}. We formally introduce the new TD operator $\ko$ in Section~\ref{sec:operator} and subsequently prove convergence. Section~\ref{sec:experiments} provides empirical results, and Section~\ref{sec:discussion} concludes. 

\vsp{-0.5em}
\section{Related Work} \label{sec:related}

The aim to find robust policies is relevant to multiple research areas, including security games, robust control/learning, safe reinforcement learning and multi-agent reinforcement learning.

The domain of \textbf{security games} has expanded in recent years with many real-world applications in critical domains~\citep{Pita2008, Shieh2012}, where the main approach has been computing exact solutions and deriving strong theoretical guarantees, mostly using equilibria concepts such as Nash and Stackelberg equilibria~\citep{Korzhyk2011a, lou2017}.
In contrast, we base our approach on \textit{reinforcement learning from interactions} with the environment, thus we do not need to know the system model; such an approach to security games has been studied less,
exceptions being for example \citet{ruan2005} and \citet{klima2016} who use reinforcement learning in the context of patrolling and illegal rhino poaching problems, respectively.
Security games often assume frequent adversarial attack, whereas our work focuses on occasional loss of control over the system, which can represent e.g. failures or adversarial attack.
Moreover, our work adopts the information asymmetry assumption often used in Stackelberg Security games \citep{Korzhyk2011a}, providing the model of attack types for the \emph{leader}, and allowing leader-strategy-informed best response strategies by attackers.
%
Similar to security games, control theory starts with a model of the system to be controlled (the \emph{plant}), and for the purpose of \textbf{robust control} assumes a set of possible plants as an explicit model of uncertainty, seeking to design a policy that stabilises all these plants~\citep{zhou1998essentials}.
A slightly weaker assumption is made in related work that assumes control over the number of observations for \emph{significant rare events} (SREs), performing updates by sampling from the model~\citep{ciosek2017}. 
In contrast, our work assumes that the model of this system is not known a priori, and a policy needs to be \textit{learned by interacting} with it.
While early work on \textbf{robust reinforcement learning} focused on learning within parameterised acceptable policies~\citep{singh1994robust}, later work transferred the objective of maximising tolerable disturbances from control theory to reinforcement learning~\citep{morimoto2005robust}.
Our work is similar to the therein defined \emph{Actor-disturber-critic}, but we replace its model of minimax simultaneous actions with stochastic transitions between multiple controllers (one being in control at any time) with arbitrary objectives for each controller.
%
In relation to the taxonomy of \textbf{safe reinforcement learning} of \citet{garcia2015comprehensive} our method falls in between \emph{Worst-Case Criterion under Parameter Uncertainty} and \emph{Risk-Sensitive Reinforcement Learning Based on the Weighted Sum of Return and Risk}, depending on the chosen alternate controller objectives. 
Our Q($\ko$) method is comparable to the $\beta$-pessimistic Q-learning method of \citet{gaskett2003}, however, we propose a more general $\ko$ operator of which Q($\ko$) is only an example. 
Finally, our approach has commonalities with the \textbf{multi-agent reinforcement learning} algorithm Minimax-Q~\citep{littman1994} for zero-sum games, which assumes minimisation over the opponent action space. However, in contrast, we define an attack to minimise over our own action space, and thus learn (but not enact) simultaneously our optimal policy and the (rare) attacks it is susceptible to.
We further cover not only minimising adversaries but also random failures or any other policy encoding other adversaries' agendas (see Section~\ref{sec:formal_model}).




\section{Background}\label{sec:background}
\note[Richard]{Reviewer 1 says to elaborate more, make it more self-contained. I think it's not necessary and would leave it how it is.}

This work belongs to the field of reinforcement learning (RL)~\cite{sutton2018reinforcement}, and makes use of the core concept of a \textit{Markov decision process (MDP)}. An MDP is formally defined by a tuple $(S, A, R, P)$, where $S$ is a finite set of states, $A$ is a finite set of actions, $R(s, a) \rightarrow r \in \mathbb{R}$ is the reward function for a given state $s \in S$ and an action $a \in A$ and $P(s'|s, a)$ is the transition function giving a probability of reaching state $s'$ after taking action $a$ in state $s$. In this work we also consider a multi-agent setting, which uses the formalism of the \textit{stochastic game}, which generalizes the MDP to multiple agents and is defined by a tuple $(n, S, A_1 \ldots A_n, R_1 \ldots R_n, P)$, where $n$ is the number of agents and $A_i$ is the action space of agent $i$. The joint action space is $A = A_1 \cup \ldots \cup A_n$, and a joint action is $\bm{a}=(a_1, a_2, \ldots, a_n)$.\footnote{We use the common shorthand $\bm{a_{-i}}$ to denote the joint action of all agents except agent $i$, i.e., $\bm{a_{-i}}=(a_1,\ldots,a_{i-1},a_{i+1},\ldots,a_n)$.}
$R_i(s, a_i, \bm{a_{-i}}) \rightarrow r_i$ is the reward function of agent $i$ for given state $s$ and joint action $\bm{a}$, and $P(s'|s, \bm{a})$ is the state transition function.

The main goal of RL is finding an optimal policy for given MDP. A common method is \textit{temporal difference (TD)} learning, which estimates the value of a state by bootstrapping from the value-estimates of successor states using Bellman-style equations. 
TD methods work by updating their state-value estimates in order to reduce the \textit{TD error}, which describes the difference between the current estimate of the (state-)value and a new sample obtained from interacting with the environment.
In this work we focus on modifying the update \textit{target} of this TD error, which has the standard form of $r + \gamma V(s')$, where $\gamma$ is the discount factor and $V(s')$ is the current estimate of the next state's value. In \textit{on-policy} methods such as SARSA the target is induced by the actual (behaviour) policy being followed, while \textit{off-policy} methods use an alternative operator (e.g., greedy maximization as in Q-learning). We refer the reader to~\citet{sutton2018reinforcement} for an overview of RL.

\section{The Robust TD Operator $\ko$}\label{sec:operator}


Before we formally define our robust TD operator $\ko$, we give an intuitive example. Suppose a Q-learning agent needs to learn a robust policy against a potential malicious adversary who could, with some probability $\kp$, take over control in the next state.\footnote{We use the symbol $\ko$ to denote the proposed TD operator and the symbol $\kp$ for the parameter denoting the probability of attack.} The value of the next state $s_{t+1}$ thus depends on who is in control: if the agent is in control, she can choose an optimal action that maximizes expected return; or if the adversary is in control he might, in the worst case, aim to minimize the expected return. This can be captured by the following modified TD error
\begin{equation*}
    \delta_t = r_{t+1} + \gamma \left((1-\kp)\max_{a} Q(s_{t+1},a) + \kp \min_{a} Q(s_{t+1},a)\right) - Q(s_t,a_t),
\end{equation*}
where we assume that the agent has knowledge of (or can estimate) the probability $\kp$.\footnote{Note that while a token of control could be included in the state (doubling its size), our approach instead directly applies model-based bootstrap updates. This makes it explicit that the robustness target is a chosen parameter of the operator, and allows to learn robust strategies before observing SREs or when learning during the SREs is not possible. This also highlights the difference between state transition probabilities, which are part of the environment and thus external to the agent, and the expected probability of SREs given by $\kp$ which are part of the agent's internal model.}

In the following we first present a formal, general model of the operator $\ko$, by modifying the target in the standard Bellman style value function. We then present practical implementations of TD($\ko$) methods that use this operator, for both single- and multi-agent settings, based on the classical on- and off-policy TD learning algorithms (Expected) SARSA and Q-learning. 



\subsection{Formal Model}\label{sec:formal_model}


We consider a set of $m$ possible \textit{control policies} $C=\{\sigma_1,\ldots,\sigma_m\}$. At each time step, one of these policies is in control (and thus decides on the next action) with some probability $p(\sigma_i|s)$ that may depend on the state $s$. 
The set $C$ and probability function $p(\cdot)$ are assumed to be (approximately) known by the agent. In our new TD methods, the value of the next state $s'$ then becomes a function of both the state and the function $p(\cdot)$, which we capture in our proposed operator $\ko$ as $V^\ko(s')$. 
Note that the set $C$ includes the focal policy $\pi$ that we seek to optimise in face of (possibly adversarial) alternative controllers. Such external control policies can represent for example a malicious attacker, aiming to minimize the expected return, or any arbitrary dynamics, such as random failures (e.g. represented by a uniformly random policy).
Based on a prior assumption about the nature of $\sigma$ we want to optimise the focal policy $\pi$ without necessarily observing actual attacks or failures. This means learning our robust policy $\pi$ right from the start. 

We define $\sigma$ in terms of our own Q-value function, for example an attacker that is minimising our expected return. Thus we need to learn only one Q-value function $Q^\pi$. This is similar to the standard assumption in Stackelberg games that the attacker is able to fully observe our past actions and thus can enact the informed best response.
We define the Q-value function update for our policy $\pi$ based on standard Bellman equation and given the operator $\ko$ as
\begin{equation} \label{eq:general_model}
    Q^\pi(s,a) \leftarrow Q^\pi(s,a) + \alpha \Big[\underbrace{r + \gamma V^{\ko}({s}')}_\textrm{target} - Q^\pi(s,a) \Big].
\end{equation}
%
Note that where in the standard Bellman equation we would have $V^\pi(s) = \sum_{a} \pi(s,a)Q^\pi(s,a)$, in our case we have
\begin{equation}
    V^{\ko}(s) = \sum_{\sigma \in C} p(\sigma | s) \sum_{a} \sigma(s,a)Q^\pi(s,a),
    \label{eq:value_fn}
\end{equation}
%
computed as a weighted sum over all possible control policies $\sigma \in C$. 
Note that we can learn $Q^\pi$ without actually experiencing any attack or malfunction, based only on prior assumptions about the possible control policies as captured by the operator $\ko$.
We refer to this target modification as the operator $\ko$ because it closely resembles the Bellman optimality operator $\mathcal{T}^\star$, which is defined as $\mathcal{T}^\star V (s) = \max_a \big [ R(s, a) + \sum_{s'} P(s'|s, a)\gamma V(s') \big]$. Thus, we can then formally define the $\ko$ optimality operator 
$\mathcal{T}^\star_\ko$ by substituting the value function $V(\cdot)$ with $V^\ko(\cdot)$.

In the following we present several $\ko$-versions of classical TD methods. For simplicity we assume a scenario in which we have only a single adversarial external policy $\sigma$ that aims to minimize our value, and thus $C=\{\pi,\sigma\}$. Note however that our model is general, and would work for any $C$ and $p(\cdot)$.

\subsection{Examples of TD($\ko$) Methods}\label{sec:td-algorithms}


We first present single-agent $\ko$-based learning methods by building on the standard TD methods Q-learning and Expected SARSA. Then we present two-agent joint-action learning approaches. Although a generalization to $n$ agents is relatively straightforward, we choose to focus solely on the single- and two-agent case in this paper for clarity of exposition.
In each case, we consider the setting in which either the focal agent, with policy $\pi$, is in control, or the external adversary with policy $\sigma$ aiming to minimize return. We further simplify the model by making the control policy probability function $p(\cdot)$ state-independent, reducing it to a probability vector.

\vsp{-0.5em}
\subsubsection{Single-Agent Methods}
\label{sec:single-agent-kappa}

Before we present the algorithms, it is important to note that we need to distinguish the \textit{target} and \textit{behaviour} policies. The $\ko$-operator is defined on the target (see Eq.~\eqref{eq:general_model}), while the behaviour policy is used only for selecting actions. We assume an $\epsilon$-greedy behaviour policy throughout.

In \textbf{off-policy Q($\bm{\ko}$)}, the target policy is the greedy policy $\pi(s) = \argmax_a Q(s,a)$ that maximizes expected return. The adversarial policy on the other hand aims to minimize the return, i.e., $\sigma(s) = \argmin_a Q(s,a)$. Assuming a probability of attack of $\kp$ as before, we have $p(\pi) = (1 - \kp)$ and $p(\sigma) = \kp$. Thus, Eq.~\eqref{eq:value_fn} becomes
\begin{equation*}
    V^\ko(s) = (1-\kp)\max_a Q(s,a) + \kp \min_a Q(s,a).
\end{equation*}

For \textbf{on-policy Expected SARSA($\bm{\ko}$)} the target is the (expectation over the) focal policy $\pi$, while the adversarial policy $\sigma$ remains the same as before. Thus, we have
\begin{equation*}
\begin{aligned}
   V^\ko(s)
   &= (1- \kp) \mathbb{E}_{a \sim \pi} \big[ Q(s,a) \big] + \kp  \min_a Q(s, a) \\
   &= (1- \kp) \sum_a \pi(a|s) Q(s,a) + \kp  \min_a Q(s, a).
\end{aligned}
\end{equation*}

\vsp{-0.5em}
\subsubsection{Multi-Agent Methods}\label{sec:multi-agent-kappa}

We move from a single-agent setting to a scenario in which multiple agents interact. 
For sake of exposition we only present a two-agent case with different action spaces, $A_1$ and $A_2$, but an identical reward function and thus a shared joint action Q-value function $Q: S \times A_1 \times A_2 \rightarrow \mathbb{R}$. 
Moreover, we assume full communication during the learning phase, allowing the agents to take each other's policies into account when selecting the next action.\footnote{A common practice in cooperative multi-agent learning settings, see e.g.,~\citep{foerster2018counterfactual,sunehag2017value}.} Our algorithms are therefore based on the joint-action learning (JAL) paradigm~\cite{claus1998}. 
We further assume that only one agent can be attacked at each time step.\footnote{Although relaxing this assumption is straightforward, we opt to keep it for clarity.} 
%
For \textbf{multi-agent Q($\bm{\ko}$)} we can write Eq.~\eqref{eq:value_fn} for each individual agent as
\begin{equation*}
\begin{aligned}
    V^\ko(s) = (1-\kp)\max_{A_1} \max_{A_2} Q(s,\langle a_1,a_2\rangle) & + \frac{\kp}{2} \, \min_{A_1} \max_{A_2} Q(s, \langle a_1, a_2 \rangle)\\
    & + \frac{\kp}{2} \, \min_{A_2} \max_{A_1} Q(s, \langle a_1, a_2 \rangle)
\end{aligned}
\end{equation*}
with $a_1 \in A_1$ and $a_2 \in A_2$, representing the scenario in which no attack happens with probability $(1-\kp)$, and each agent is attacked individually with probability $\nicefrac{\kp}{2}$.\footnote{Note the order of the $\min\max$, which follows the Stackelberg assumption of an all-knowing attacker who moves last.} Analogously, we can define Eq.~\eqref{eq:value_fn} for \textbf{multi-agent Expected SARSA($\bm{\ko}$)} as
\begin{equation*}
\begin{aligned}
   V^\ko(s) &= (1- \kp) \qquad \mathbb{E}_{a_1 \sim \pi_1, a_2 \sim \pi_2} \big[ Q(s,\langle a_1,a_2 \rangle)\big]\\  
    & + \quad \frac{\kp}{2} \quad \, \min_{A_1} \mathbb{E}_{a_2 \sim \pi_2} \big[ Q(s, \langle a_1, a_2 \rangle) \big]\\
    & + \quad \frac{\kp}{2} \quad \, \min_{A_2} \mathbb{E}_{a_1 \sim \pi_1} \big[ Q(s, \langle a_1, a_2 \rangle) \big]
\end{aligned}
\end{equation*}
where we now compute an expectation over the actual policy of the agents that are not attacked, while the attacker is still minimizing.

\section{Theoretical Analysis}



In this section we analyze theoretical properties of the proposed $\ko$-methods. We start by relating the different algorithms to each other in the limit of their respective parameters. Then we proceed to show convergence of both Q($\ko$) and Expected SARSA($\ko$) to 
two different fixed points: (i) to the optimal value function $Q^\star$ of the original MDP in the limit where $\kp \rightarrow 0$; and (ii) to the optimal robust value function $Q^\star_{\ko}$ of the MDP that is \textit{generalized} w.r.t. the operator $\ko$ for constant parameter $\kp$. 
Note that \textit{optimality} in this sense is purely induced by the relevant operator. In (i) this is the standard Bellman optimality which maximizes the expected discounted return of the MDP. However, in (ii) we derive optimality in the context of \textit{Generalized MDPs}~\cite{szepesvari1997}, where optimal simply means the fixed point of a given operator, which can take many forms. 

Before proceeding with the convergence proofs, Figure~\ref{fig:update_target} summarizes some relationships between the algorithms in terms of their targets, in the limit of their respective parameters: As is known, Expected SARSA, SARSA, and Q-learning become identical in the limit of a greedy policy \citep{sutton2018reinforcement,vanseijen2009}. Furthermore, the update targets of our $\ko$-methods approach the update targets of the standard TD methods on which they are based as $\kp \rightarrow 0$. Finally, Expected SARSA($\ko$) and Q($\ko$) share the same relationship as their original versions, and thus Expected SARSA($\ko$) approaches Q($\ko$) as $\epsilon \rightarrow 0$. 
Note that the algorithms' equivalence in the limit does not hold in the transient phase of the learning process, and hence in practice they may converge on different paths and to different policies that share the same value function.
For a comprehensive understanding of the algorithms introduced in Section~\ref{sec:td-algorithms}, the following sections provide proofs for both convergence of $\ko$ methods for $\kp \rightarrow 0$, as well as their convergence when $\kp$ stays constant.\footnote{While we focus on the adversarial targets considered in Section~\ref{sec:td-algorithms}, a previous proof of convergence under persistent exploration~\citep{szepesvari1997} can be interpreted as a model of random failures with fixed kappa.}
\begin{figure}
\centering
\resizebox {0.8\linewidth} {!} {
\begin{tikzpicture}[node distance=1cm, auto]
 \node[] (dummyStart) {};
 \node[punkt, fill=orange, inner sep=2pt, right=1cm of dummyStart] (Qkappa) {Q($\ko$)};
 \node[punkt, fill=green, inner sep=2pt,left=1cm of dummyStart]
 (expSARSAkappa) {Expected SARSA($\ko$)}
 edge[pil] node[auto] {$\epsilon \rightarrow 0$} (Qkappa.west);
 \node[punkt, fill=green, inner sep=2pt,below=0.8cm of expSARSAkappa]
 (expSARSA) {Expected SARSA}
  edge[pil,<-]node[auto] {$\kp \rightarrow 0$} (expSARSAkappa.south);
 \node[punkt,fill=orange, inner sep=2pt,below=0.8cm of Qkappa]
 (Q) {Q-learning}
 edge[pil,<-]node[auto] {$\kp \rightarrow 0$} (Qkappa.south)
 edge[pil, <-] node[above] {$\epsilon \rightarrow 0$} (expSARSA.east);
 \node[left=0.5cm of Q](dummy){};
 \node[punkt, fill=green, inner sep=2pt,below=2cm of dummyStart]
 (SARSA) {SARSA}
  edge[pil,<-]node[auto] {$\epsilon \rightarrow 0$} (expSARSA.south)
  edge[pil,->]node[below, xshift=10pt] {$\;\epsilon \rightarrow 0$} (Q.south);
\end{tikzpicture}
}
\vsp{-1.2em}
\caption{The relationship between the learning targets of different algorithms in the limits of their parameters. On-policy methods are in green, off-policy methods in orange.}
\label{fig:update_target}
\vsp{-1em}
\end{figure}
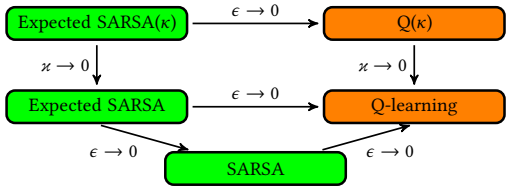
%
%
%
%

\subsection{Convergence to the Optimal $Q^\star$}
\label{sec:convergence_optimal}

There exist several proofs of convergence for the temporal difference algorithms Q-learning~\cite{jaakkola1994, tsitsiklis1994}, SARSA~\cite{singh2000convergence}, and Expected SARSA~\cite{vanseijen2009}. Each of these proofs hinges on linking the studied algorithm to a stochastic process, and then using convergence results from stochastic approximation theory \cite{dvoretzky1956,robbins1951}. These proofs are based on the following lemma, presented as Theorem 1 in~\citet{jaakkola1994} and as Lemma 1 in~\citet{singh2000convergence}. These differ in the third condition, which describes the contraction mapping of the operator. The contraction property used for the Q-learning proof~\cite{jaakkola1994} has the form $||\mathbb{E}\{F_t(\cdot)|P_t\}|| \leq \gamma ||\Delta_t||$, where $\gamma \in [0,1)$. We show the lemma as it was used for the SARSA proof provided by~\citet{singh2000convergence}, who show that the contraction property does not need to be strict; strict contraction is required to hold only asymptotically.

\begin{lemma}
Consider a stochastic process $(\alpha_t, \Delta_t, F_t), t \geq 0$, where $\alpha_t, \Delta_t, F_t: X \rightarrow \mathbb{R}$ satisfy the equations
\begin{equation*}
\Delta_{t+1}(x) = \big(1 - \alpha_t(x)\big)\Delta_t(x) + \alpha_t(x)F_t(x), \, x \in X, \, t = 0,1,2,\dots   
\end{equation*}
Let $P_t$ be a sequence of increasing $\sigma$-fields such that $\alpha_0$ and $\Delta_0$ are $P_0$-measurable and $\alpha_t, \Delta_t$ and $F_{t-1}$ are $P_t$-measurable, $t=1,2,\dots$. Then, $\Delta_t$ converges to zero with probability one (w.p.1) under the following assumptions:
\begin{enumerate}
    \item the set $X$ is finite,
    \item $0 \leq \alpha_t(x_t) \leq 1$, $\sum_t \alpha_t(x_t) = \infty$, $\sum_t \alpha^2_t(x_t) < \infty$ w.p.1,
    \item $||\mathbb{E}\{F_t(\cdot)|P_t\}|| \leq \gamma ||\Delta_t|| + c_t$, where $\gamma \in [0,1)$ and $c_t$ converges to zero w.p.1,
    \item $Var\{F_t(x_t)|P_t\} \leq K(1 + ||\Delta_t||)^2$, where $K$ is some constant,
\end{enumerate}
where $||\cdot||$ denotes a maximum norm.
\label{lemma_stoch}
\end{lemma}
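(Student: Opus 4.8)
The plan is to treat Lemma~\ref{lemma_stoch} as a stochastic approximation result of Dvoretzky type, generalizing the classical Robbins--Monro argument to a componentwise recursion governed by a maximum-norm pseudo-contraction. The central device is to split the driving term into its conditional mean and a martingale-difference noise component, $F_t(x) = \mathbb{E}\{F_t(x)\mid P_t\} + \eta_t(x)$ with $\eta_t(x) = F_t(x) - \mathbb{E}\{F_t(x)\mid P_t\}$, so that $\eta_t$ has zero conditional mean and, by assumption~(4), conditional variance bounded by $K(1+\|\Delta_t\|)^2$. Condition~(3) controls the mean part, which contracts $\|\Delta_t\|$ by a factor $\gamma < 1$ up to the vanishing slack $c_t$, while the step-size conditions in~(2) together with the variance bound control the noise part. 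The finiteness of $X$ in~(1) is precisely what lets me pass freely between componentwise estimates and statements about the maximum norm $\|\cdot\|$.

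I would carry this out in three stages. First, establish that $\|\Delta_t\|$ is bounded w.p.1; this must come first because the bound in~(4) is vacuous if $\|\Delta_t\|$ is allowed to blow up. This follows by observing that each component update is a convex combination of $\Delta_t(x)$ and $F_t(x)$ whose conditional mean has norm at most $\gamma\|\Delta_t\| + c_t$, so the norm cannot grow faster than a stochastic-approximation process with square-summable increments, and a stopping-time/supermartingale argument confines it. Second, I would isolate the noise by defining an auxiliary process
\[
w_{t+1}(x) = (1-\alpha_t(x))\,w_t(x) + \alpha_t(x)\,\eta_t(x), \qquad w_0 = 0,
\]
and prove $w_t \to 0$ w.p.1. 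Since $\eta_t$ is a martingale-difference sequence whose conditional variance is now controlled through the bounded $\|\Delta_t\|$, and since $\sum_t \alpha_t^2(x) < \infty$ while $\sum_t \alpha_t(x) = \infty$, this is exactly the regime in which the classical noise term vanishes; I would prove it via the convergence of the nonnegative near-supermartingale $\mathbb{E}\{w_{t+1}^2(x)\mid P_t\} \le (1-\alpha_t(x))^2 w_t^2(x) + \alpha_t^2(x)\,\mathrm{Var}\{F_t(x)\mid P_t\}$.

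Third, I would handle the mean part by an iterated-contraction (rescaling) argument. Writing $\Delta_{t+1}(x)$ as the auxiliary noise term plus a bias process driven by $\gamma\|\Delta_t\| + c_t$, I would show that for any $\epsilon > 0$, once $c_t < \epsilon$ and $\|w_t\| < \epsilon$ (both eventually true), the norm satisfies $\limsup_t \|\Delta_t\| \le \gamma\,\limsup_t \|\Delta_t\| + O(\epsilon)$. Since $\gamma < 1$, this forces $\limsup_t \|\Delta_t\| = 0$, i.e.\ $\Delta_t \to 0$ w.p.1. Concretely this is a squeezing argument over ``epochs'': take a bound $D_k$ valid from some time on, feed it through one full epoch of updates (using $\sum_t \alpha_t(x) = \infty$ to guarantee the convex combinations forget the initial value), and obtain a strictly smaller bound $D_{k+1} \approx \gamma D_k + O(\epsilon)$; the $D_k$ converge to $O(\epsilon)/(1-\gamma)$, and letting $\epsilon \to 0$ yields zero.

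The main obstacle is the coupling between the two parts: the variance bound in~(4) depends on $\|\Delta_t\|$, which is itself the quantity we are trying to drive to zero, so the boundedness step and the noise step cannot be proved in isolation and must be interleaved --- first pinning down boundedness of $\|\Delta_t\|$ so that~(4) yields a genuinely summable noise contribution, and only then driving $w_t$, and hence $\Delta_t$, to zero. A secondary subtlety is that condition~(3) is only a nonstrict, asymptotic contraction because of the additive slack $c_t$; the argument must tolerate this by absorbing $c_t$ into the $\epsilon$-budget of the final squeezing step rather than relying on a clean geometric decay at each individual iteration.
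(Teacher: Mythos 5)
The first thing to note is that the paper does not prove this lemma at all: it is imported verbatim from the stochastic-approximation literature, stated as Theorem 1 of \citet{jaakkola1994} and, in the exact form used here (with the slack term $c_t$ allowing the contraction to hold only asymptotically), as Lemma 1 of \citet{singh2000convergence}. The paper's own work begins only downstream, where the lemma is instantiated for the $\ko$-operators. Measured against those cited sources rather than against the paper, your sketch is a faithful reconstruction of the classical argument: the decomposition $F_t = \mathbb{E}\{F_t \mid P_t\} + \eta_t$ into a contracting mean and a martingale-difference noise, the auxiliary process $w_t$ driven to zero by the square-summability of the step sizes, and the epoch-wise squeezing $D_{k+1} \approx \gamma D_k + O(\epsilon)$ with $\epsilon \to 0$ are exactly the route taken there, and absorbing $c_t$ into the $\epsilon$-budget is precisely how \citet{singh2000convergence} relax the strict-contraction requirement of the earlier proof. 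The one place your sketch is materially thinner than the sources is the boundedness stage: because condition (4) lets the conditional variance grow like $K(1+\|\Delta_t\|)^2$, a direct supermartingale/stopping-time bound on $\|\Delta_t\|$ does not go through as stated; the cited proofs instead use a rescaling device --- whenever $\|\Delta_t\|$ leaves a ball of radius $D$ the process is renormalized so that the rescaled process has uniformly bounded noise, and convergence of the rescaled process rules out divergence of the original. Your own remark that boundedness and noise control must be interleaved is exactly the right instinct, but the ``confinement'' step needs to be fleshed out along these lines; this is a gap of detail within the correct approach, not a wrong approach.
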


The proof continues by relating Lemma~\ref{lemma_stoch} to the temporal difference algorithm, following the same reasoning as \citet{vanseijen2009} in their convergence proof for Expected SARSA. We define $X = S \times A$, $P_t = \{Q_0, s_0, a_0, r_0, \alpha_0, s_1, a_1, \dots, s_t, a_t\}, x_t = (s_t, a_t)$, which represents the past at step $t$ and $\alpha_t(x_t) = \alpha_t(s_t, a_t)$ is a learning rate for state $s_t$ and action $a_t$. To show the convergence of $Q$ to the optimal fixed point $Q^\star$ we set $\Delta_t(x_t) = Q_t(s_t, a_t) - Q^\star(s_t, a_t)$, therefore when $\Delta_t$ converges to zero, then the $Q$ values converge to $Q^\star$. The maximum norm $||\cdot||$ can be expressed as maximizing over states and actions as $||\Delta_t||=\max_s \max_a |Q_t(s,a) - Q^\star(s, a)|$.


We follow the reasoning of Theorem 1 from~\citet{vanseijen2009}, where we repeat the conditions (1), (2) and (4) and modify the condition \textbf{(3)} for the $\ko$ methods as:
\begin{theorem}
Q($\ko$) and Expected SARSA($\ko$) as defined in Section \ref{sec:single-agent-kappa} using the respective value function $V^\ko$, defined by 
\begin{equation*}
    Q_{t+1}(s_t, a_t) = (1 - \alpha_t(s_t, a_t))Q_t(s_t, a_t) + \alpha_t(s_t, a_t)[r_t + \gamma V^\ko_t(s_{t+1})]
\end{equation*}
converge to the optimal Q function $Q^\star(s, a)$ if:
\begin{enumerate}
    \item the state space $S$ and action space $A$ are finite,
    \item $\alpha_t(s_t, a_t) \in (0, 1)$, $\sum_t \alpha_t(s_t, a_t) = \infty$ and $\sum_t \alpha^2_t(s_t, a_t) < \infty$ w.p.1,
    \itembf $\kp$ converges to zero w.p.1,
    \myitemA for Expected SARSA($\ko$) the policy is greedy in the limit with infinite exploration (GLIE assumption),
    \item the reward function is bounded.
\end{enumerate}
\label{theorem_convergence}
\end{theorem}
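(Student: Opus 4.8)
The plan is to invoke the stochastic-approximation result of Lemma~\ref{lemma_stoch} with $X = S \times A$, $x_t = (s_t,a_t)$, and $\Delta_t(s_t,a_t) = Q_t(s_t,a_t) - Q^\star(s_t,a_t)$, so that $\Delta_t \to 0$ w.p.1 is exactly the claimed convergence $Q_t \to Q^\star$. Reading off the update rule, the noise term is
\begin{equation*}
F_t(s_t,a_t) = r_t + \gamma V^\ko_t(s_{t+1}) - Q^\star(s_t,a_t).
\end{equation*}
Conditions (1), (2) and (4) of the lemma are inherited essentially verbatim from the Expected SARSA proof of~\citet{vanseijen2009}: (1) holds by finiteness of $S$ and $A$; (2) is the learning-rate hypothesis of the theorem; and (4) follows from boundedness of the reward together with the observation that $V^\ko_t(s')$ is, for each of our operators, a convex combination of $\max$, $\min$ and policy-expectations of $Q_t$, hence bounded by $\|\Delta_t\|$ plus a constant. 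The entire content of the proof therefore lies in establishing the contraction condition (3), and it is here that the role of $\kp$ enters.

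For \textbf{Q($\ko$)} I would first take the conditional expectation and use the fixed-point identity $Q^\star(s_t,a_t) = R(s_t,a_t) + \gamma\sum_{s'} P(s'|s_t,a_t) V^\star(s')$, where $V^\star(s') = \max_a Q^\star(s',a)$, to obtain
\begin{equation*}
\mathbb{E}\{F_t \mid P_t\} = \gamma \sum_{s'} P(s'|s_t,a_t)\,\big[V^\ko_t(s') - V^\star(s')\big].
\end{equation*}
Writing $V^\ko_t(s') = (1-\kp_t)\max_a Q_t(s',a) + \kp_t \min_a Q_t(s',a)$ and splitting off the greedy part, the non-expansiveness of $\max$ gives $|\max_a Q_t(s',a) - V^\star(s')| \le \|\Delta_t\|$, so the $(1-\kp_t)$ term contributes at most $(1-\kp_t)\|\Delta_t\|$. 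The $\kp_t$-weighted term is the one nontrivial point: since $\min$ and $\max$ are different operators it cannot be controlled by $\|\Delta_t\|$ alone. The key trick is to bound it by the \emph{fixed} spread of the optimal value function, setting $D = \max_{s'} |\max_a Q^\star(s',a) - \min_a Q^\star(s',a)|$ (finite by boundedness of rewards), which yields $|\min_a Q_t(s',a) - V^\star(s')| \le \|\Delta_t\| + D$. Combining the two pieces gives
\begin{equation*}
\|\mathbb{E}\{F_t \mid P_t\}\| \le \gamma\,\|\Delta_t\| + \gamma\,\kp_t\, D,
\end{equation*}
so condition (3) holds with $c_t = \gamma\,\kp_t\, D$, which converges to zero w.p.1 precisely by the hypothesis $\kp_t \to 0$.

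For \textbf{Expected SARSA($\ko$)} the same decomposition applies, except the greedy part is replaced by the policy expectation $\sum_a \pi_t(a|s') Q_t(s',a)$. I would split this as $\sum_a \pi_t(a|s')[Q_t(s',a) - Q^\star(s',a)]$, bounded by $\|\Delta_t\|$, plus the policy-suboptimality term $\sum_a \pi_t(a|s') Q^\star(s',a) - V^\star(s')$. This second term does not vanish for a fixed exploratory policy, and showing that it does converge to zero is the main obstacle of the proof; it is exactly where the GLIE assumption (3a) is needed, following~\citet{vanseijen2009}: as the behaviour policy becomes greedy in the limit while $Q_t \to Q^\star$, this bias term vanishes w.p.1 and can be folded into $c_t$ alongside the $\gamma\,\kp_t\, D$ contribution. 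Once condition (3) is secured in both cases, Lemma~\ref{lemma_stoch} delivers $\Delta_t \to 0$ w.p.1 and hence the theorem.
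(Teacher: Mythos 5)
Your treatment of Q($\ko$) is correct and is essentially the paper's argument: both decompose the target into the standard Q-learning contraction term (bounded by $\gamma\|\Delta_t\|$) plus a $\kp$-weighted residual that is folded into $c_t$. The only difference is cosmetic: you bound the residual by the fixed spread $D$ of $Q^\star$, while the paper uses the spread of the current estimates, $c_t = \gamma\kp\max_s|\min_a Q_t(s,a) - \max_a Q_t(s,a)|$; both vanish as $\kp \rightarrow 0$ given bounded rewards.

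However, your Expected SARSA($\ko$) argument has a genuine gap: it is circular. Your decomposition routes the policy expectation through $Q^\star$, leaving the bias term $\sum_a \pi_t(a|s')Q^\star(s',a) - \max_a Q^\star(s',a)$. The GLIE assumption makes $\pi_t$ greedy in the limit with respect to $Q_t$, \emph{not} with respect to $Q^\star$; under GLIE this bias therefore tends to $Q^\star(s', \argmax_a Q_t(s',a)) - \max_a Q^\star(s',a)$, which vanishes only if the greedy action of $Q_t$ is eventually optimal for $Q^\star$ --- i.e., only if $Q_t \rightarrow Q^\star$, which is precisely the conclusion being proven. You invoke this explicitly (``while $Q_t \to Q^\star$''), but Lemma~\ref{lemma_stoch} requires $c_t \rightarrow 0$ w.p.1 as a \emph{hypothesis}, verified independently of the conclusion $\Delta_t \rightarrow 0$. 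Nor can the bias simply be re-expanded in terms of $Q_t$: doing so costs up to $2\|\Delta_t\|$ extra, inflating the coefficient in condition (3) beyond $\gamma$ and breaking the contraction. The paper (following \citet{vanseijen2009}) avoids this entirely by never comparing against $Q^\star$ inside the residual: it bounds the deviation of the whole $\ko$-target from $\max_a Q_t(s_{t+1},a)$, setting
\begin{equation*}
c_t = \gamma \max_s \Big|(1-\kp)\sum_a \pi_t(a|s)Q_t(s,a) + \kp \min_a Q_t(s,a) - \max_a Q_t(s,a)\Big|,
\end{equation*}
a quantity expressed solely in terms of $Q_t$, which converges to zero w.p.1 under GLIE, $\kp \rightarrow 0$, and bounded iterates, with no reference to the limit point. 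Restructuring your second part along these lines closes the gap.
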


\begin{proof}{Convergence of Q($\ko$):}
To prove convergence of Q($\ko$) we have to show that the conditions from Lemma~\ref{lemma_stoch} hold. Conditions (1), (2) and (4) of Theorem~\ref{theorem_convergence} correspond to conditions (1), (2) and (4) of Lemma~\ref{lemma_stoch} \cite{vanseijen2009}. We now need to show that the contraction property holds as well, using condition (3) of Theorem~\ref{theorem_convergence}. Adapting the proof of~\citet{vanseijen2009}, we set $F_t(x) = F_t(s, a) = r_t(s, a) + \gamma V^\ko_t(s') - Q^\star(s, a)$ to show that $F_t(s, a)$ is a contraction mapping, i.e., condition (3) in Lemma~\ref{lemma_stoch}.
For Q($\ko$) we write:
\begin{equation*}
    F_t = r_t + \gamma \big( (1-\kp) \max_{a} Q_t(s_{t+1},a) + \kp \min_{a} Q_t(s_{t+1}, a) \big) - Q^\star(s_t, a_t).
\end{equation*}
We want to show that $||\mathbb{E}\{F_t\}|| \leq \gamma ||\Delta_t|| + c_t$ to prove the convergence of Q($\ko$) to the optimal value $Q^\star$. 
\begin{equation*}
\resizebox{1\hsize}{!}{$
\begin{split}
        & ||\mathbb{E}\{F_t\}|| \\
        & = ||\mathbb{E}\{r_t + \gamma \big( (1-\kp) \max_{a} Q_t(s_{t+1},a) + \kp \min_{a} Q_t(s_{t+1}, a) \big) - Q^\star(s_t, a_t)\} ||\\
        & \leq  ||\mathbb{E}\{r_t + \gamma \max_a Q_t(s_{t+1}, a) - Q^\star(s_t, a_t)\}|| +\\
        & \quad \gamma ||\mathbb{E}\{\kp \min_a Q_t(s_{t+1}, a) - \kp\max_a Q_t(s_{t+1}, a)\}||\\
        & \leq \gamma \max_s |\max_a Q_t(s, a) - \max_a Q^\star(s, a)| +\\
        & \quad \gamma \max_s |\kp \min_a Q_t(s, a) - \kp\max_a Q_t(s, a)|\\
        & \leq \gamma ||\Delta_t|| + \gamma \kp \max_s |\min_a Q_t(s, a) - \max_a Q_t(s, a)|,
\end{split}$}
\end{equation*}
%
%
where the first inequality follows from standard algebra and the fact that splitting the maximum norm yields at least as large a number, 
the second inequality follows from the definition of $Q^\star$ and the maximal difference in values over all states being at least as large as a difference between values given in state $s_{t+1}$, and the third inequality follows from the definition of $||\Delta_t||$ above.\footnote{Recall that we set out in this section to show convergence to the same optimal Q-value as classical Q-learning $Q^\star(s_t, a) = r_t + \gamma \max_{a'} Q^\star(s_{t+1}, a')$, even if we do so by our new operator.} 
We can see that if we set $c_t =  \gamma \kp \max_s |\min_a Q_t(s,a) - \max_a Q_t(s,a)|$, then for $\kp \rightarrow 0$ we get $c_t$ converging to zero w.p.1, thus proving convergence of Q($\ko$). 
\end{proof}

\begin{proof}{Convergence of Expected SARSA($\ko$):}
Similarly as in the proof of Q($\ko$) 
we need to show that the contraction property holds as well, this time using conditions (3) and (3a) of Theorem~\ref{theorem_convergence}.
We first define:
\begin{equation*}
\resizebox{1\hsize}{!}{$
\begin{split}
F_t = r_t + \gamma \big( (1- \kp) \sum_a \pi_t(a|s_{t+1}) Q_t(s_{t+1}, a) + \kp \min_{a} Q_t(s_{t+1}, a) \big) - Q^\star(s_t, a_t)
\end{split}$}
\end{equation*}
and then show the following:
\begin{equation*}
\resizebox{1\hsize}{!}{$
\begin{split}
        & ||\mathbb{E}\{F_t\}||\\
        & = ||\mathbb{E}\{r_t + \gamma \big( (1- \kp) \sum_a \pi_t(a|s_{t+1}) Q_t(s_{t+1}, a) + \kp  \min_{a} Q_t(s_{t+1}, a) \big) - Q^\star(s_t, a_t)\}||\\
        & \leq  ||\mathbb{E}\{r_t + \gamma \max_a Q_t(s_{t+1}, a) - Q^\star(s_t, a_t)\}|| +\\
        & \quad \gamma ||\mathbb{E}\{(1- \kp) \sum_a \pi_t(a|s_{t+1}) Q_t(s_{t+1}, a) + \kp  \min_{a} Q_t(s_{t+1}, a) - \max_a Q_t(s_{t+1}, a)\}||\\
        & \leq \gamma \max_s |\max_a Q_t(s, a) - \max_a Q^\star(s, a)| +\\
        & \quad \gamma \max_s |(1- \kp) \sum_a \pi_t(a|s) Q_t(s, a) + \kp  \min_{a} Q_t(s, a) - \max_a Q_t(s, a)|,\\
\end{split}$}
\end{equation*}
where the inequalities use the same operations as above in the proof of Q($\ko$).
If we set $c_t =  \gamma \max_s |(1- \kp) \sum_a \pi_t(a|s) Q_t(s, a) + \kp  \min_{a} Q_t(s, a) - \max_a Q_t(s, a)|$ and assume that the policy is greedy in the limit with infinite exploration (GLIE assumption) and parameter $\kp \rightarrow 0$ w.p.1 (conditions (3) and (3a)), it follows that $c_t$ converges to zero w.p.1, thereby proving that Expected SARSA($\ko$) converges to optimal fixed point $Q^\star$.
\end{proof}



\subsection{Convergence to the Robust $Q^\star_{\ko}$}\label{sec:convergence_safe}

In this section we show convergence to 
the robust value function $Q^\star_{\ko}$ which is optimal w.r.t. the operator $\ko$. The main difference with the proof of Theorem~\ref{theorem_convergence} is that here we do not require $\kp \rightarrow 0$ but instead assume it remains constant over time. 
We base our reasoning on the theory of \textit{Generalized MDPs} \citep{szepesvari1997}.
A Generalized MDP is defined using operator-based notation as 
\begin{equation*}
    \Big( \bigotimes \bigoplus (R + \gamma V) \Big)(s) = \max_a \sum_{s'} P(s'|s, a)\big( R(s, a) + \gamma V(s')\big),
\end{equation*}
where the operator $\bigotimes$ defines how an optimal agent chooses her actions (in the classic Bellman equation this denotes maximization) and operator $\bigoplus$ defines how the value of the current state is updated by the value of the next state (in the classic Bellman equation this denotes a probability weighted average over the transition function). These operators can be chosen to model various different scenarios. 
The generalized Bellman equation can now be written as $V^\star = \bigotimes \bigoplus(R + \gamma V^\star)$.
The main result of \citet{szepesvari1997} is that if $\bigotimes$ and $\bigoplus$ are non-expansions, then there is a unique optimal solution to which the generalized Bellman equation converges, given certain assumptions. For $0 \leq \gamma < 1$ and non-expansion properties of $\bigotimes$ and $\bigoplus$ we get a contraction mapping of the Bellman operator $\mathcal{T}$ defined as $\mathcal{T}V = \bigotimes \bigoplus (R + \gamma V)$. Then, the operator $\mathcal{T}$ has a unique fixed point by the Banach fixed-point theorem~\cite{smart1974}.



Building on the stochastic approximation theory results (as we also used in the Section~\ref{sec:convergence_optimal}), \citet{szepesvari1997} show the following:
\begin{lemma}
Generalized Q-learning with operator $\bigotimes$ using Bellman operator $\mathcal{T}_t(Q', Q)(s, a) =$
\begin{equation*}
\resizebox{1\hsize}{!}{$
    \begin{cases}
    \big(1 - \alpha_t(s, a)\big) Q'(s, a) + \alpha_t(s, a)\big(r_t + \gamma(\bigotimes Q)(s'_t)\big) & \text{if~} s = s_t, a = a_t\\
    Q'(s,a) & \text{otherwise}
    \end{cases}
$}
\end{equation*}
converges to the optimal Q function w.p.1, if
\begin{enumerate}
    \item $s'_t$ is randomly selected according to the probability distribution defined by $P(s_t, a_t, \cdot)$,
    \item $\alpha_t(s_t, a_t) \in (0, 1)$, $\sum_t \alpha_t(s_t, a_t) = \infty$ and $\sum_t \alpha^2_t(s_t, a_t) < \infty$ w.p.1,
    \item $\bigotimes$ is a non-expansion,
    \item the reward function is bounded.
\end{enumerate}
\label{lemma:generalized-q}
\end{lemma}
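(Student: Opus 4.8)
The plan is to reduce this lemma to the stochastic-approximation result of Lemma~\ref{lemma_stoch} by identifying the generalized Q-learning update with the iteration $\Delta_{t+1} = (1-\alpha_t)\Delta_t + \alpha_t F_t$, exactly as was done in the proof of Theorem~\ref{theorem_convergence}. First I would fix the target $Q^\star$ to be the unique fixed point of the generalized Bellman operator $\mathcal{T}Q = \bigotimes\bigoplus(R+\gamma Q)$; its existence and uniqueness follow from the observation recalled in the text that, for $0\le\gamma<1$ and with both $\bigotimes$ and $\bigoplus$ non-expansions, $\mathcal{T}$ is a $\gamma$-contraction in the maximum norm and hence has a unique fixed point by the Banach fixed-point theorem~\cite{smart1974}. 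I would then set $X = S\times A$, $\Delta_t(s,a) = Q_t(s,a)-Q^\star(s,a)$, and
\[
F_t(s,a) = r_t + \gamma(\bigotimes Q_t)(s'_t) - Q^\star(s,a),
\]
with $P_t$ the history up to step $t$, so that the stated $\mathcal{T}_t$ update is precisely the stochastic recursion of Lemma~\ref{lemma_stoch}.

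Conditions (1), (2) and (4) of Lemma~\ref{lemma_stoch} then transfer almost directly: finiteness of $X$ from finiteness of $S$ and $A$, the Robbins--Monro step-size conditions from assumption~(2), and the variance bound $Var\{F_t\mid P_t\}\le K(1+\|\Delta_t\|)^2$ from the bounded-reward assumption~(4) together with the fact that a non-expansion $\bigotimes$ bounds $|(\bigotimes Q_t)(s')|$ by a constant plus $\|Q_t\|\le\|\Delta_t\|+\|Q^\star\|$. The crux is condition~(3), the contraction property. Using assumption~(1), that $s'_t\sim P(\cdot\mid s_t,a_t)$, the conditional expectation collapses the sampled successor into the $\bigoplus$ average, giving
\[
\mathbb{E}\{F_t\mid P_t\}(s,a) = R(s,a)+\gamma\sum_{s'}P(s'|s,a)(\bigotimes Q_t)(s') - Q^\star(s,a) = (\mathcal{T}Q_t)(s,a)-(\mathcal{T}Q^\star)(s,a).
\]

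It then remains to bound $\|\mathcal{T}Q_t-\mathcal{T}Q^\star\|$, which I would do in two chained steps. The $\bigoplus$ operator (probability-weighted average) is a non-expansion, so $\big|\sum_{s'}P(s'|s,a)[(\bigotimes Q_t)(s')-(\bigotimes Q^\star)(s')]\big|\le\max_{s'}|(\bigotimes Q_t)(s')-(\bigotimes Q^\star)(s')|$; and the assumed non-expansion of $\bigotimes$ (assumption~(3)) gives $|(\bigotimes Q_t)(s')-(\bigotimes Q^\star)(s')|\le\max_a|Q_t(s',a)-Q^\star(s',a)|\le\|\Delta_t\|$. Composing these with the $\gamma$ factor yields $\|\mathbb{E}\{F_t\mid P_t\}\|\le\gamma\|\Delta_t\|$, i.e.\ condition~(3) of Lemma~\ref{lemma_stoch} holds with $c_t=0$. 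With all four conditions verified, Lemma~\ref{lemma_stoch} gives $\Delta_t\to0$ w.p.1, that is $Q_t\to Q^\star$. The main obstacle I anticipate is being careful that the composition of the two separate non-expansions indeed produces a strict $\gamma$-contraction of the \emph{joint} operator $\mathcal{T}$ on Q-functions rather than on state-value functions, since the generalized MDP in the text is phrased for $V$; establishing that the $\bigotimes$ non-expansion over actions and the $\bigoplus$ non-expansion over successor states chain together correctly in the maximum norm is the step demanding the most care.
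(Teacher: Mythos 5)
Your proof is sound, but there is nothing in the paper to compare it against line by line: the paper does not prove this lemma at all, it imports it verbatim from \citet{szepesvari1997}, and the surrounding text only recalls why $\mathcal{T}V=\bigotimes\bigoplus(R+\gamma V)$ has a unique fixed point (non-expansions plus $\gamma<1$ give a contraction, then Banach). What you have produced is a self-contained reconstruction of the Szepesv\'ari--Littman argument using the paper's own Lemma~\ref{lemma_stoch}: identify the asynchronous update with the recursion $\Delta_{t+1}=(1-\alpha_t)\Delta_t+\alpha_t F_t$, take $Q^\star$ to be the Banach fixed point of the Q-form operator $(\mathcal{T}Q)(s,a)=R(s,a)+\gamma\sum_{s'}P(s'|s,a)(\bigotimes Q)(s')$, and verify the four conditions, the crux being that conditioning on $P_t$ collapses the sampled successor into the transition average so that $\|\mathbb{E}\{F_t\mid P_t\}\|=\|\mathcal{T}Q_t-\mathcal{T}Q^\star\|\le\gamma\|\Delta_t\|$ by chaining the two non-expansions. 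This mirrors exactly how the paper deploys Lemma~\ref{lemma_stoch} in the proof of Theorem~\ref{theorem_convergence}, and it is also, in substance, how \citet{szepesvari1997} prove the result via their asynchronous stochastic-approximation theorem. The trade-off: the paper's citation keeps Section~\ref{sec:convergence_safe} short and inherits the full generality of the original theory (arbitrary non-expansive $\bigoplus$, not just the transition average), while your derivation makes the result self-contained and shows that condition (3) of Lemma~\ref{lemma_stoch} holds with $c_t=0$, i.e., a genuinely strict contraction, which is stronger than the asymptotic version needed for the $\kp\to 0$ results of Section~\ref{sec:convergence_optimal}. Two details you handled or should make explicit: you correctly caught the $V$-versus-$Q$ form issue (your displayed expectation uses the right Q-form operator, with $\bigotimes$ summarizing over actions and the transition expectation playing the role of $\bigoplus$); and your variance bound tacitly uses that $\bigotimes$ applied to the zero function is bounded (true for any fixed function, and $\bigotimes 0=0$ for the max/min/mixture operators the paper actually uses), together with the usual implicit requirement, shared with the paper's other proofs, that every state--action pair is updated infinitely often so that $\sum_t\alpha_t(s,a)=\infty$ can hold for each pair.
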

We base our convergence proofs for Q($\ko$) and Expected SARSA($\ko$) on the insights of \citet{szepesvari1997} given in Lemma~\ref{lemma:generalized-q}.

\begin{theorem}
Q($\ko$) and Expected SARSA($\ko$) as defined in Section \ref{sec:single-agent-kappa} converge to the robust Q function $Q^\star_\ko$ for any fixed $\kp$.
\label{theorem_convergence_robust}
\end{theorem}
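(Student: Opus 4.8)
The plan is to recast both Q($\ko$) and Expected SARSA($\ko$) as instances of Generalized Q-learning and then invoke Lemma~\ref{lemma:generalized-q} directly. Concretely, I would identify the action-selection operator $\bigotimes$ of the Generalized MDP with the $\ko$-operator, so that $(\bigotimes Q)(s) = V^\ko(s)$: for Q($\ko$) this reads $(\bigotimes Q)(s) = (1-\kp)\max_a Q(s,a) + \kp \min_a Q(s,a)$, and for Expected SARSA($\ko$) it reads $(\bigotimes Q)(s) = (1-\kp)\sum_a \pi(a|s) Q(s,a) + \kp \min_a Q(s,a)$. With this identification, conditions (1), (2) and (4) of Lemma~\ref{lemma:generalized-q} (random sampling of $s'_t$ from $P$, the Robbins--Monro learning-rate schedule, and bounded rewards) hold under exactly the assumptions already used in Theorem~\ref{theorem_convergence}, so the only nonstandard requirement is condition (3): that $\bigotimes$ is a non-expansion in the maximum norm. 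Once this is established, the result of \citet{szepesvari1997} guarantees that $\mathcal{T}V = \bigotimes\bigoplus(R+\gamma V)$ is a $\gamma$-contraction (since $0 \le \gamma < 1$ and $\bigoplus$, the transition-weighted average, is itself a non-expansion), so by the Banach fixed-point theorem the robust target $Q^\star_\ko$ exists and is the unique fixed point, and Lemma~\ref{lemma:generalized-q} yields convergence to it w.p.1.

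The core of the argument is therefore the non-expansion of $\bigotimes$, which I would build from three elementary facts: $\max_a$ and $\min_a$ are non-expansions in the max norm, i.e. $|\max_a Q_1(s,a) - \max_a Q_2(s,a)| \le \max_a |Q_1(s,a) - Q_2(s,a)|$ and analogously for $\min_a$; the expectation $\sum_a \pi(a|s)(\cdot)$ against a fixed distribution is a non-expansion because its weights are non-negative and sum to one; and any convex combination of non-expansions is again a non-expansion. For Q($\ko$), applying the triangle inequality to $(1-\kp)(\max_a Q_1 - \max_a Q_2) + \kp(\min_a Q_1 - \min_a Q_2)$ and then the two single-operator bounds collapses the right-hand side to $\big((1-\kp)+\kp\big)\max_a|Q_1(s,a)-Q_2(s,a)| = \max_a|Q_1(s,a)-Q_2(s,a)|$; taking the maximum over $s$ gives $\|\bigotimes Q_1 - \bigotimes Q_2\| \le \|Q_1 - Q_2\|$. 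The Expected SARSA($\ko$) case is identical after replacing $\max_a$ by $\sum_a \pi(a|s)(\cdot)$ in the first summand, the point being that the adversarial $\min$ term does not destroy the non-expansion.

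The step I expect to be most delicate is the Expected SARSA($\ko$) case, where $\bigotimes$ depends on the policy $\pi$: Lemma~\ref{lemma:generalized-q} is stated for a fixed operator, whereas an $\epsilon$-greedy policy induced by the current $Q$-estimates makes $\bigotimes$ time-varying. I would handle this by treating the target policy as fixed, so that $\bigotimes$ is a genuine time-independent non-expansion and the fixed point $Q^\star_\ko$ is unambiguous; this is precisely the on-policy value that Expected SARSA targets, and it keeps $Q^\star_\ko$ well defined for constant $\kp$. If one instead insists on an evolving policy, a limiting argument would be needed to show the induced operators converge and that the perturbation is negligible, but for the robust statement the fixed-policy reading is the natural one and all remaining inequalities are routine, mirroring those in the proof of Theorem~\ref{theorem_convergence}.
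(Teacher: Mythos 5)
Your Q($\ko$) argument is correct and essentially the same as the paper's: reduce everything to the non-expansion of $(1-\kp)\max_a + \kp\min_a$, using the facts that $\max$ and $\min$ are non-expansions in the max norm and that convex combinations of non-expansions are non-expansions (Theorems 8 and 9 in Appendix B of \citet{szepesvari1997}), then invoke Lemma~\ref{lemma:generalized-q}.

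The gap is in your handling of Expected SARSA($\ko$). You correctly flag the delicate point---the operator depends on the policy, and an $\epsilon$-greedy policy induced by the current $Q$-estimates makes a naively defined $\bigotimes$ time-varying---but your resolution (freeze the target policy as a fixed distribution) proves convergence of a different algorithm: policy evaluation of a frozen $\pi$ under adversarial perturbation. Expected SARSA($\ko$) as defined in Section~\ref{sec:single-agent-kappa} is on-policy with respect to an $\epsilon$-greedy behaviour policy that tracks the current $Q$-table, so the fixed point of your frozen-policy operator is not, without further argument, the quantity the actual algorithm converges to. The paper's resolution, following the persistent-exploration argument of \citet{szepesvari1997} (their Section 4.5), is to expand the $\epsilon$-greedy expectation explicitly as a function of $Q$ itself,
\[
(\textstyle{\bigotimes^\ko} Q)(s,a) = (1-\kp)\Bigl(\epsilon \tfrac{1}{|A|}\textstyle\sum_a Q(s,a) + (1-\epsilon)\max_a Q(s,a)\Bigr) + \kp \min_a Q(s,a),
\]
which is a \emph{time-independent} operator (it depends only on $Q$, not on any externally supplied policy) and is again a non-expansion by repeated application of Theorems 8 and 9. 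With this form, Lemma~\ref{lemma:generalized-q} applies directly for fixed $\epsilon$, and the fixed point is exactly the one the on-policy algorithm tracks. Once you write the expectation this way, the rest of your argument goes through unchanged; without it, the Expected SARSA($\ko$) half of the theorem is not established.
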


\begin{proof}{Convergence of Q($\ko$) to $Q^\star_{\ko}$:}
To prove convergence of Q($\ko$) we follow the proof of Generalized Q-learning in Lemma~\ref{lemma:generalized-q}.
The only condition we need to guarantee is the non-expansion property of the operator in the value function update, which for Q($\ko$) is a weighted average of the operators \textit{min} and \textit{max}. 
We write the operator $\bigotimes$ for Q($\ko$) as $\bigotimes^\ko$ and define it as
\[
(\textstyle{\bigotimes^\ko} Q)(s, a) = (1-\kp) \max_a Q(s,a) + \kp \min_a Q(s, a).
\]
In Appendix B of~\citet{szepesvari1997}, Theorem 9 states that any linear combination of non-expansion operators is also a non-expansion operator. 
Moreover Theorem 8 states that the summary operators $\max$ and $\min$ are also non-expansions. 
Therefore, $\bigotimes^\ko$ is a non-expansion as well, thus proving the convergence of Q($\ko$) to the robust fixed point $Q^\star_\ko$ induced by the operator $\ko$.
\end{proof}



\begin{proof}{Convergence of Expected SARSA($\ko$) to $Q^\star_{\ko}$:}
We base our convergence proof of Expected SARSA($\ko$) again on the work of \citet{szepesvari1997}, this time on their insights regarding persistent exploration (Section 4.5 in their paper). They show that Generalized Q-learning with $\epsilon$-greedy action selection converges, for a fixed $\epsilon$, in the Generalized MDP. Following similar reasoning, we define the operator $\bigotimes$ for Expected SARSA($\ko$) with fixed $\epsilon$ as
\begin{equation*}
\resizebox{1\hsize}{!}{
$ (\bigotimes^\ko Q)(s,a) = (1 - \kp)\left(\epsilon \frac{1}{|A|} \sum_a Q(s, a) + (1 - \epsilon) \max_a Q(s, a)\right) + \kp \min_{a} Q(s, a). $
}
\end{equation*}
Again, from repeated application of Theorems 8 and 9 in Appendix B of~\citet{szepesvari1997} it follows that $\bigotimes^\ko$ is a non-expansion as well. 
Therefore, by Lemma~\ref{lemma:generalized-q}, Expected SARSA($\ko$) converges to $Q^\star_{\ko}$ for fixed exploration $\epsilon$.
\end{proof}




It remains an open question whether Expected SARSA($\ko$) also converges for decreasing $\epsilon$, e.g., under the GLIE assumption, even though we conjecture that it might.


\subsection{Convergence in the Multi-Agent Case}

We now prove convergence of the cooperative multi-agent variant of the $\ko$ methods presented in Section~\ref{sec:multi-agent-kappa}. This proof builds on the theory of Generalised MDPs, similar to the proofs presented in Section~\ref{sec:convergence_safe}. Therefore this proof also assumes a fixed probability of attack $\kp$. In addition, we make use of the assumption that agents can communicate freely in the learning phase, and thus receive identical information and can build a common joint-action Q-table.
\vsp{-1em}
\begin{theorem}
Multi-agent Q($\ko$) and Expected SARSA($\ko$) as defined in Section \ref{sec:multi-agent-kappa} converge to the robust Q function $Q^\star_\ko$ for any fixed $\kp$.
\label{theorem_convergence_multi}
\end{theorem}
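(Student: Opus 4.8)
The plan is to follow the same Generalized-MDP route used in Section~\ref{sec:convergence_safe}, now lifted to the joint action space $A_1 \times A_2$. Since the agents communicate during learning and share a single joint-action Q-table $Q(s,\langle a_1,a_2\rangle)$, both multi-agent methods can be written as one instance of Generalized Q-learning in the sense of Lemma~\ref{lemma:generalized-q}, with the agent-selection operator $\bigotimes$ replaced by the multi-agent variant $\bigotimes^\ko$ read directly off the value functions of Section~\ref{sec:multi-agent-kappa}. Conditions (1), (2) and (4) of Lemma~\ref{lemma:generalized-q} hold exactly as in the single-agent proofs of Theorem~\ref{theorem_convergence_robust} (bounded rewards, Robbins--Monro step sizes, and $s'_t$ sampled from $P$), so the only thing left to verify is that $\bigotimes^\ko$ is a non-expansion; by the lemma this immediately yields convergence to the fixed point $Q^\star_\ko$ for fixed $\kp$.

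For multi-agent Q($\ko$) I would define $(\bigotimes^\ko Q)(s) = (1-\kp)\max_{A_1}\max_{A_2}Q(s,\langle a_1,a_2\rangle) + \tfrac{\kp}{2}\min_{A_1}\max_{A_2}Q(s,\langle a_1,a_2\rangle) + \tfrac{\kp}{2}\min_{A_2}\max_{A_1}Q(s,\langle a_1,a_2\rangle)$, a convex combination whose weights $(1-\kp),\tfrac{\kp}{2},\tfrac{\kp}{2}$ sum to one. Each of the three terms is a nested composition of the summary operators $\max$ and $\min$, each taken over one of the two action sets. By Theorem 8 of \citet{szepesvari1997} both $\max$ and $\min$ are non-expansions, and a composition of non-expansions is again a non-expansion, since for instance $|\min_{A_1}\max_{A_2}Q_1 - \min_{A_1}\max_{A_2}Q_2| \le \max_{A_1}|\max_{A_2}Q_1 - \max_{A_2}Q_2| \le \max_{\langle a_1,a_2\rangle}|Q_1-Q_2|$. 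Hence each term is a non-expansion, and Theorem 9 of \citet{szepesvari1997} makes the convex combination $\bigotimes^\ko$ a non-expansion, closing this case.

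Multi-agent Expected SARSA($\ko$) is handled analogously, replacing the outer maximizations by the fixed-weight expectation operators $\mathbb{E}_{a_i\sim\pi_i}$ while keeping the attacker's inner minimization. An expectation against a fixed distribution is a convex average and therefore a non-expansion, so compositions such as $\min_{A_1}\mathbb{E}_{a_2\sim\pi_2}[\cdot]$ remain non-expansions by the same composition argument, and the full convex combination is a non-expansion by Theorem 9. Exactly as in the single-agent Expected SARSA($\ko$) proof, this reasoning needs the averaging weights to be fixed, so I would invoke the persistent-exploration result (Section 4.5 of \citet{szepesvari1997}) and state convergence for a fixed exploration rate $\epsilon$.

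The main obstacle lies entirely in the non-expansion verification: confirming that the nested $\min\max$ (and $\min\mathbb{E}$) structure induced by the Stackelberg ``attacker moves last'' ordering is a non-expansion over the \emph{joint} action space, rather than over a single action set as in the single-agent case. Once the composition-of-non-expansions fact is in hand, Theorems 8 and 9 of \citet{szepesvari1997} discharge everything, and the only genuine restrictions inherited from Theorem~\ref{theorem_convergence_robust} are that $\kp$ must be held constant and, for Expected SARSA($\ko$), that $\epsilon$ must be kept fixed.
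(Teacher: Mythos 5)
Your proposal is correct and follows essentially the same route as the paper: cast both multi-agent methods as Generalized Q-learning in the sense of Lemma~\ref{lemma:generalized-q}, then verify that the joint-action $\bigotimes^\ko$ operator is a non-expansion via Theorems 8 and 9 of \citet{szepesvari1997}, concluding convergence for fixed $\kp$ (and fixed $\epsilon$ in the Expected SARSA($\ko$) case). The only cosmetic difference is that where you prove the nested $\min\max$ composition is a non-expansion by an explicit inequality, the paper discharges this step by citing Theorem 10 of \citet{szepesvari1997} on products of non-expansion operators.
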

\vsp{-0.8em}
\begin{proof}
The $\bigotimes^\ko$ operator for our multi-agent versions of Q($\ko$) and Expected SARSA($\ko$) consists of a nested combination of different components, in particular $\max_a Q(s,a)$, $\min_a Q(s,a)$, and $\sum_a \pi^\epsilon(s,a)Q(s,a)$ where $\pi^\epsilon$ is the $\epsilon$-greedy policy. By Theorem 8 of \citet{szepesvari1997}, $\max$ and $\min$ are non-expansions. By Theorem 9 of \citep{szepesvari1997}, linear combinations of non-expansion operators are also non-expansion operators. Finally, by Theorem 10 of \citep{szepesvari1997}, products of non-expansion operators are also non-expansion operators. Therefore, also $\max\max$, $\max\min$, and $\min\max$ are non-expansion operators, as are linear combinations of those compounds. Similarly, $\sum_a \pi^\epsilon(s,a)Q(s,a)$ for fixed $\epsilon$ can be written as a linear combination of summary operators, which by Theorems 8 and 9 of \citet{szepesvari1997} is a non-expansion. Therefore, the $\bigotimes^\ko$ operator used in both multi-agent Q($\ko$) and Expected SARSA($\ko$) is a non-expansion. Thus, by Lemma~\ref{lemma:generalized-q}, Q($\ko$) and Expected SARSA($\ko$) converge to $Q^\star_\ko$ for fixed $\kappa$, and in the case of Expected SARSA($\ko$), for fixed $\epsilon$.
\end{proof}

\section{Experiments and Results}\label{sec:experiments}

In this section we evaluate temporal difference methods with the proposed operator $\ko$; off-policy type of learning Q($\ko$) and on-policy type of learning Expected SARSA($\ko$). We experiment with a classic cliff walking scenario for the single-agent case and a multi-agent puddle world scenario. Both these domains contain some critical states, a cliff and a puddle respectively, which render very high negative reward for the agent(s) in case of stepping into them. These critical states represent the significant rare events (SREs). We compare our methods to classic temporal difference methods like SARSA, Q-learning and Expected SARSA. In all the experiments we consider an undiscounted ($\gamma$=1), episodic scenario.

\vsp{-0.5em}
\paragraph{Cliff Walking: single-agent}
The Cliff Walking experiment as shown in Figure~\ref{fig:cliff_walking_map} is a classic scenario proposed in~\citet{sutton2018reinforcement} and used frequently ever since (e.g.,~\cite{vanseijen2009}). The agent needs to get from the start state [S] to the goal state [G], while avoiding stepping into the cliff, otherwise rendering a reward of $-100$ and sending him back to the start. For every move which does not lead into the cliff the agent receives a reward of $-1$.
\begin{figure}[tb]
    \centering
    \includegraphics[width=0.5\linewidth, trim={0 0 0 0},clip]{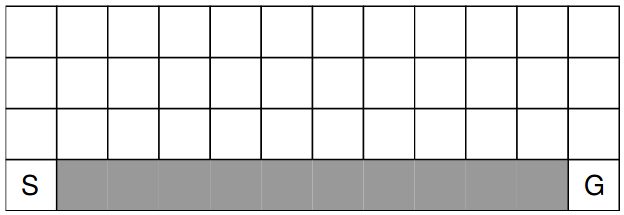}
    \vsp{-1.2em}
    \caption{The Cliff Walking: The agent needs to get from the start [S] to the goal [G], avoiding the cliff (grey tiles).}
    \label{fig:cliff_walking_map}
    \vsp{-1em}
\end{figure}

\vsp{-0.5em}
\paragraph{Puddle World: multi-agent}
The Puddle World environment is a grid world with puddles which need to be avoided by the joint-action learning agents. The two agents jointly control the movement of a single robot in the Puddle World, each controlling either direction $\langle \text{up}, \text{down} \rangle$ or $\langle \text{left}, \text{right} \rangle$. Agent 1 can take the actions \{\textit{stay, move down, move up}\} and agent 2 can choose \{\textit{stay, move left, move right, move right by 2}\}, thus their action spaces are different, further complicating the learning process compared to the single-agent scenario. The joint action is the combination of the two selected actions. We assume a reward of -1 for every move and -100 for stepping into a puddle (returning to the start node). The agents have to move together from the start node at the top left corner to the goal at the bottom right corner. Figure~\ref{fig:puddle_world_map} shows the policy learned by our proposed algorithm Q($\ko$) for the two joint-learning agents. Note how a safer path (longer, avoiding the puddles) is learned with increasing parameter $\kp$ (i.e., higher probability of SREs). For $\kp=0$ our algorithm degenerates to Q-learning (left panel).

\begin{figure}[tb]
\centering
\includegraphics[width=0.3\linewidth, trim={2.5cm 0.4cm 3cm 0.5cm}, clip]{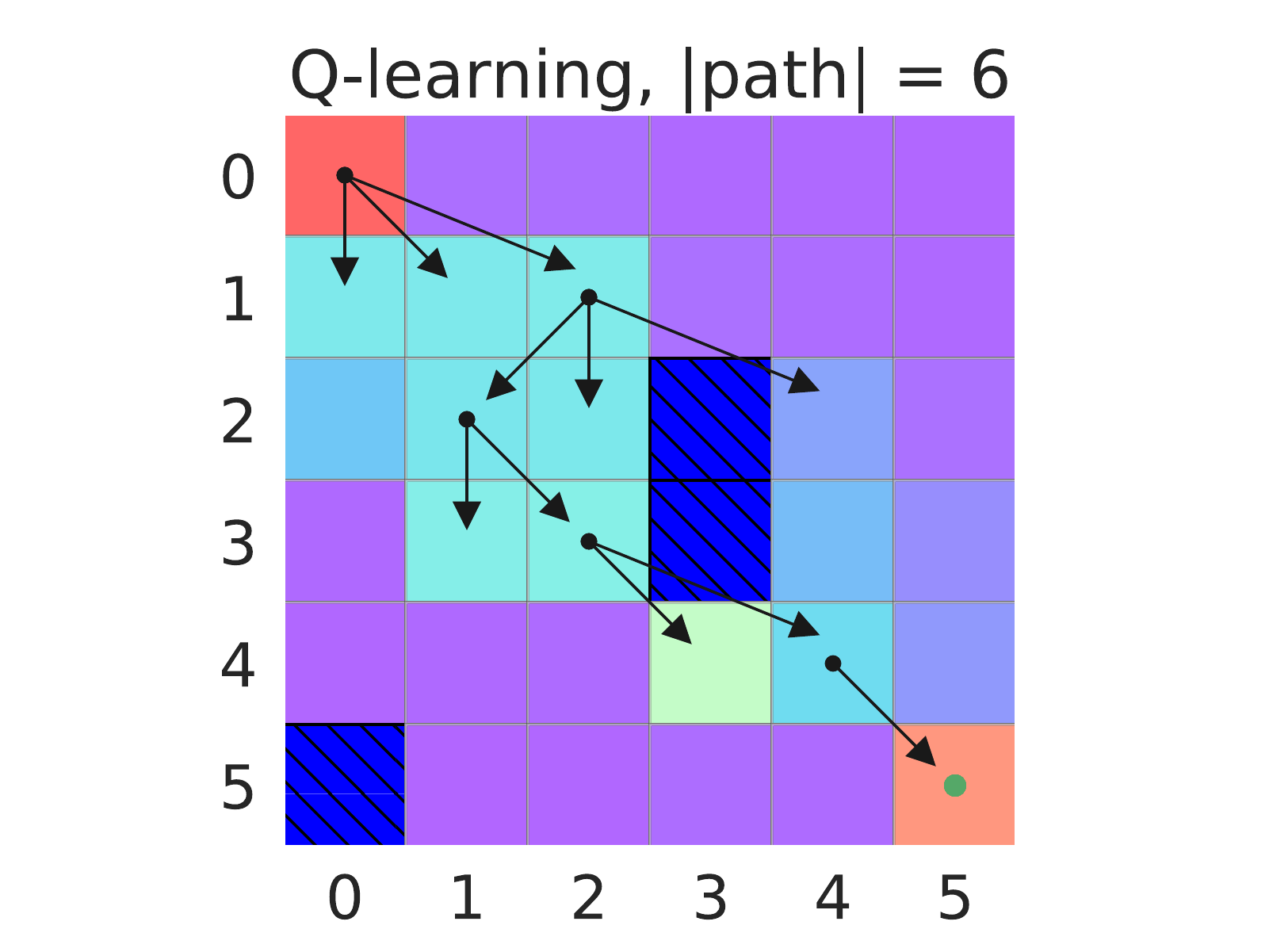}
\includegraphics[width=0.3\linewidth, trim={2.5cm 0.4cm 3cm 0.5cm}, clip]{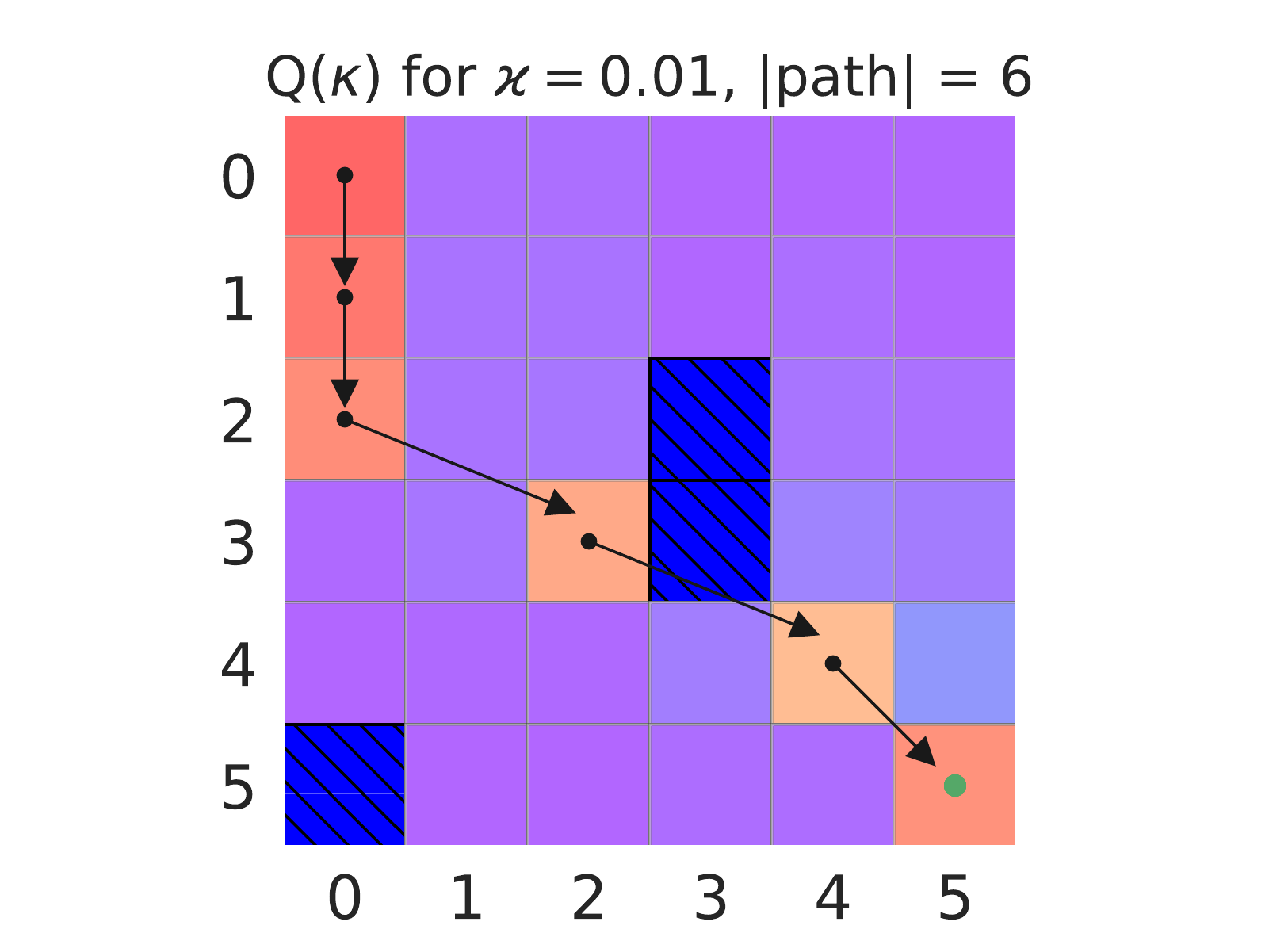}
\includegraphics[width=0.3\linewidth, trim={2.5cm 0.4cm 3cm 0.5cm}, clip]{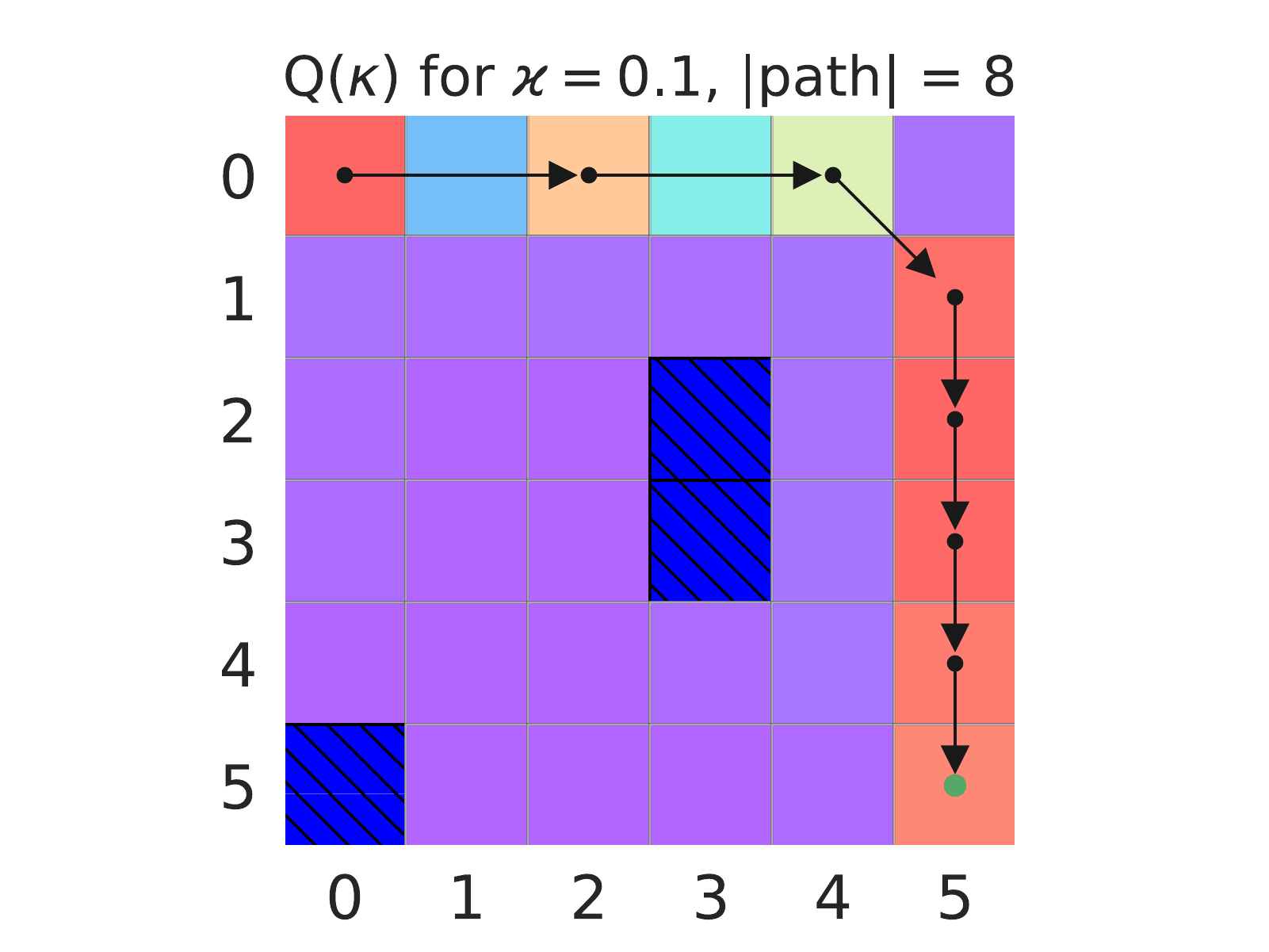}
\vsp{-1.2em}
\caption{The Puddle World: Q($\ko$) learns a safer path with increasing $\kp$. Puddles are dark blue, the arrows show the optimal actions on the learned path, and the heatmap shows the number of visits to each state (\protect\includegraphics[height=0.5em]{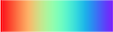}, blue is none).}
\label{fig:puddle_world_map}
\vsp{-1.4em}
\end{figure}

\subsection{Performance}
We replicate the experiment of~\citet{vanseijen2009} on the Cliff Walking domain, in which we compare our $\ko$ methods with Q-learning, SARSA and Expected SARSA, and perform a similar experiment on the Puddle World domain. In line with \citep{vanseijen2009} we show (i) early performance, which is the average return over the first $100$ training episodes, and (ii) converged performance, which is the average return over $100,000$ episodes.
\begin{figure*}[tbp]
\centering
\subfigure{\includegraphics[width=0.3\textwidth, trim={0.4cm 0cm 0.5cm 0},clip]{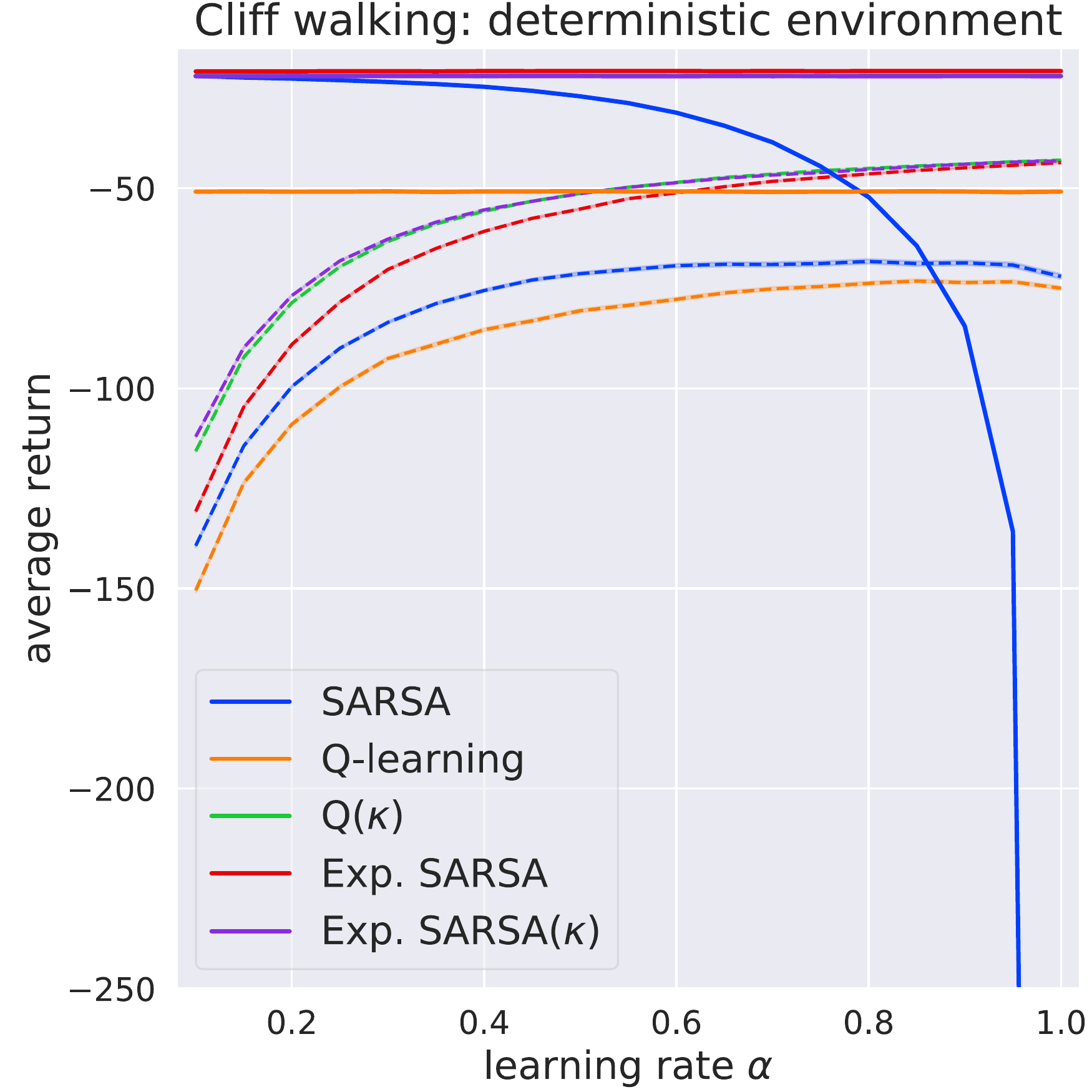}
}
\hspace{1em}%
\subfigure{\includegraphics[width=0.3\textwidth, trim={0.4cm 0cm 0.5cm 0},clip]{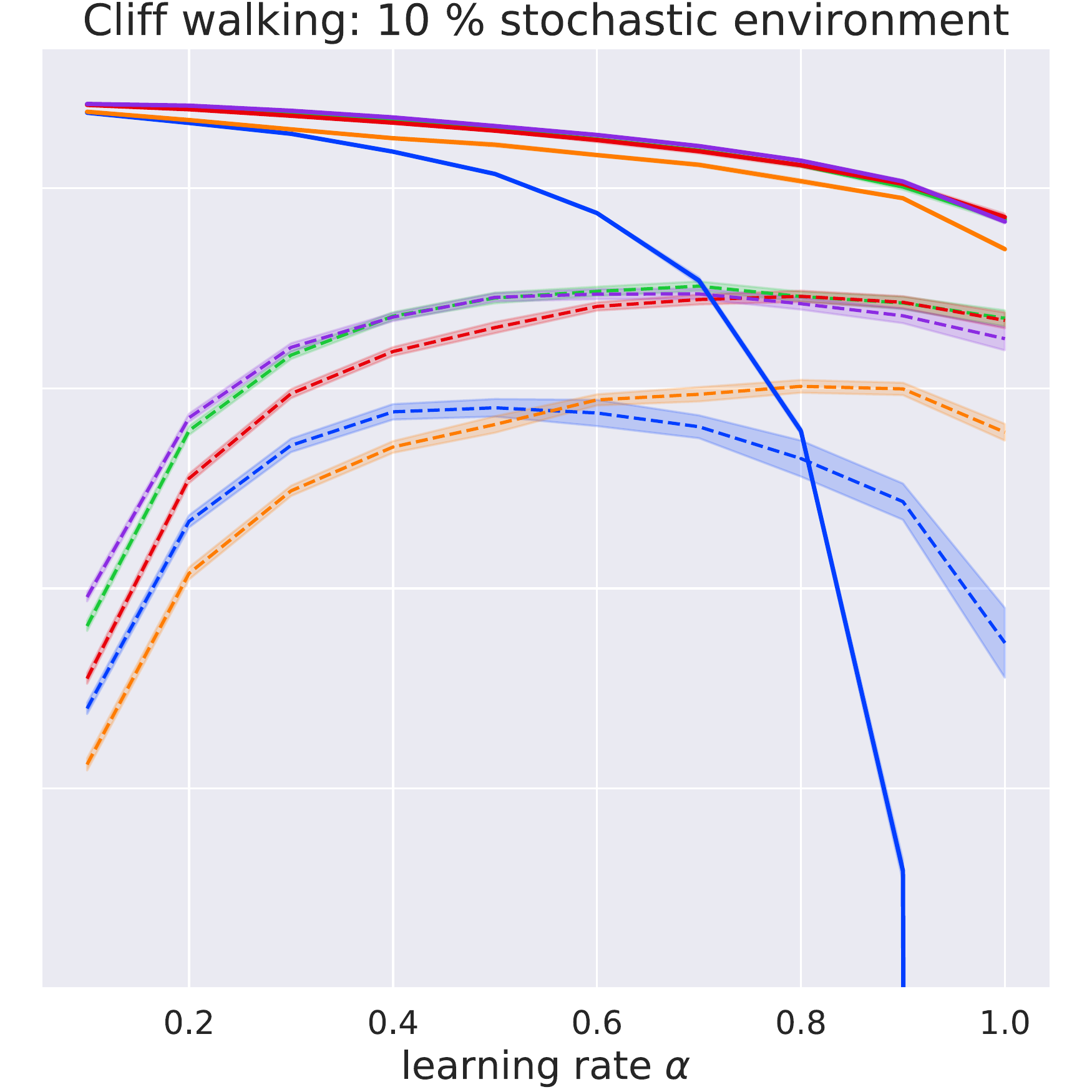}
}
\hspace{1em}%
\subfigure{\includegraphics[width=0.3\textwidth, trim={0.4cm 0cm 0.5cm 0},clip]{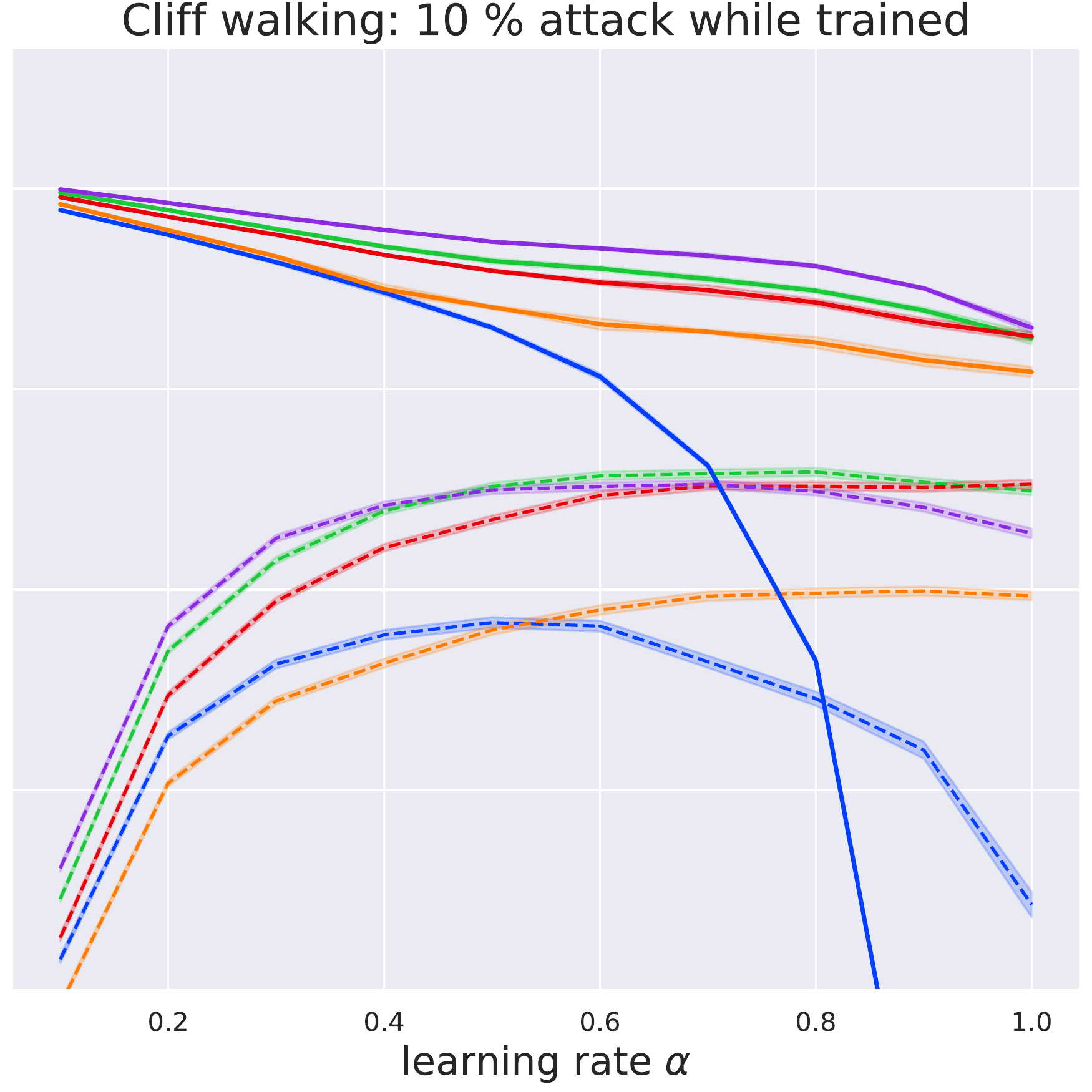}
}\\
\vsp{-2em}
\subfigure{\includegraphics[width=0.3\textwidth, trim={0.1cm 0cm 0.6cm 0},clip]{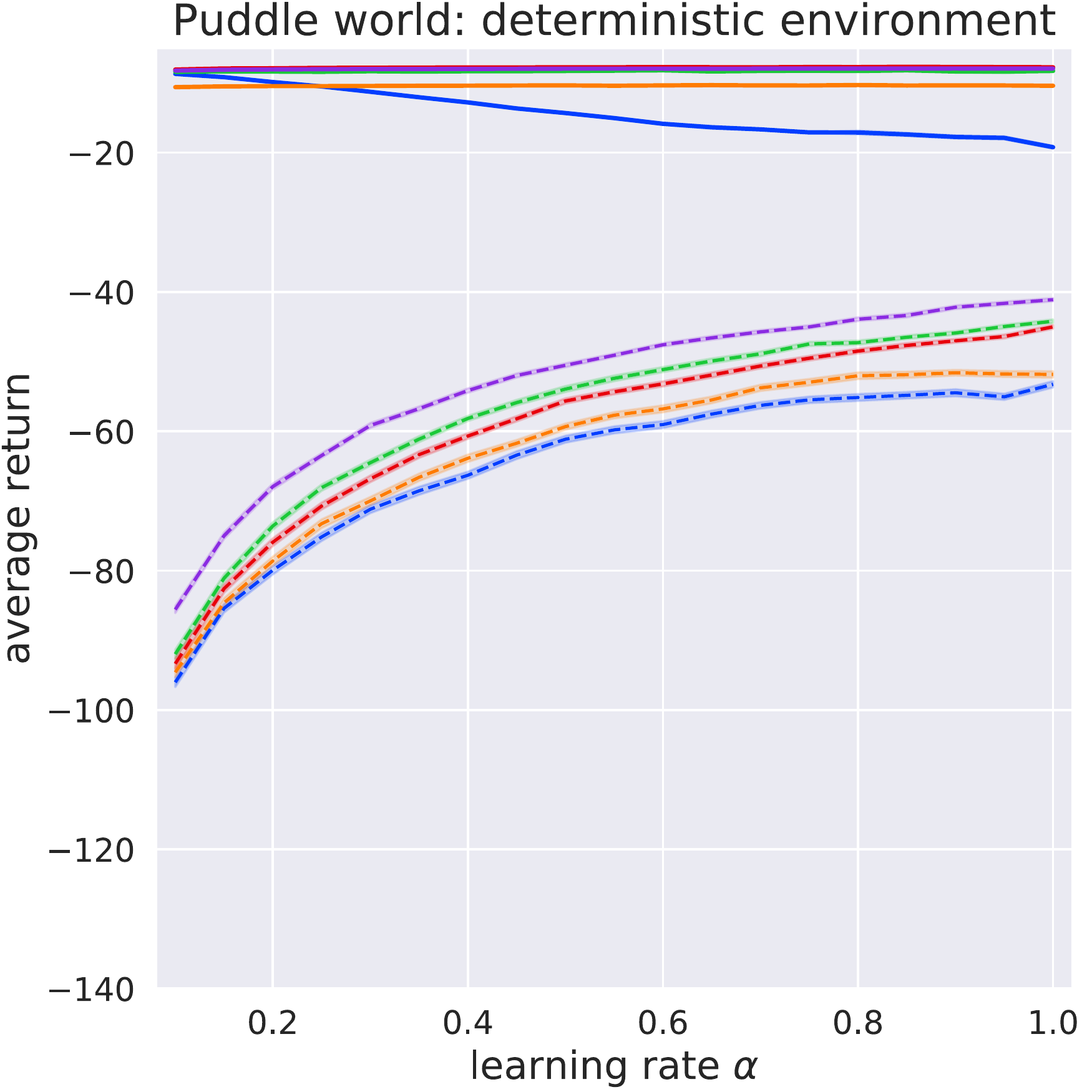}
}
\hspace{1em}%
\subfigure{\includegraphics[width=0.3\textwidth, trim={0.1cm 0cm 0.6cm 0},clip]{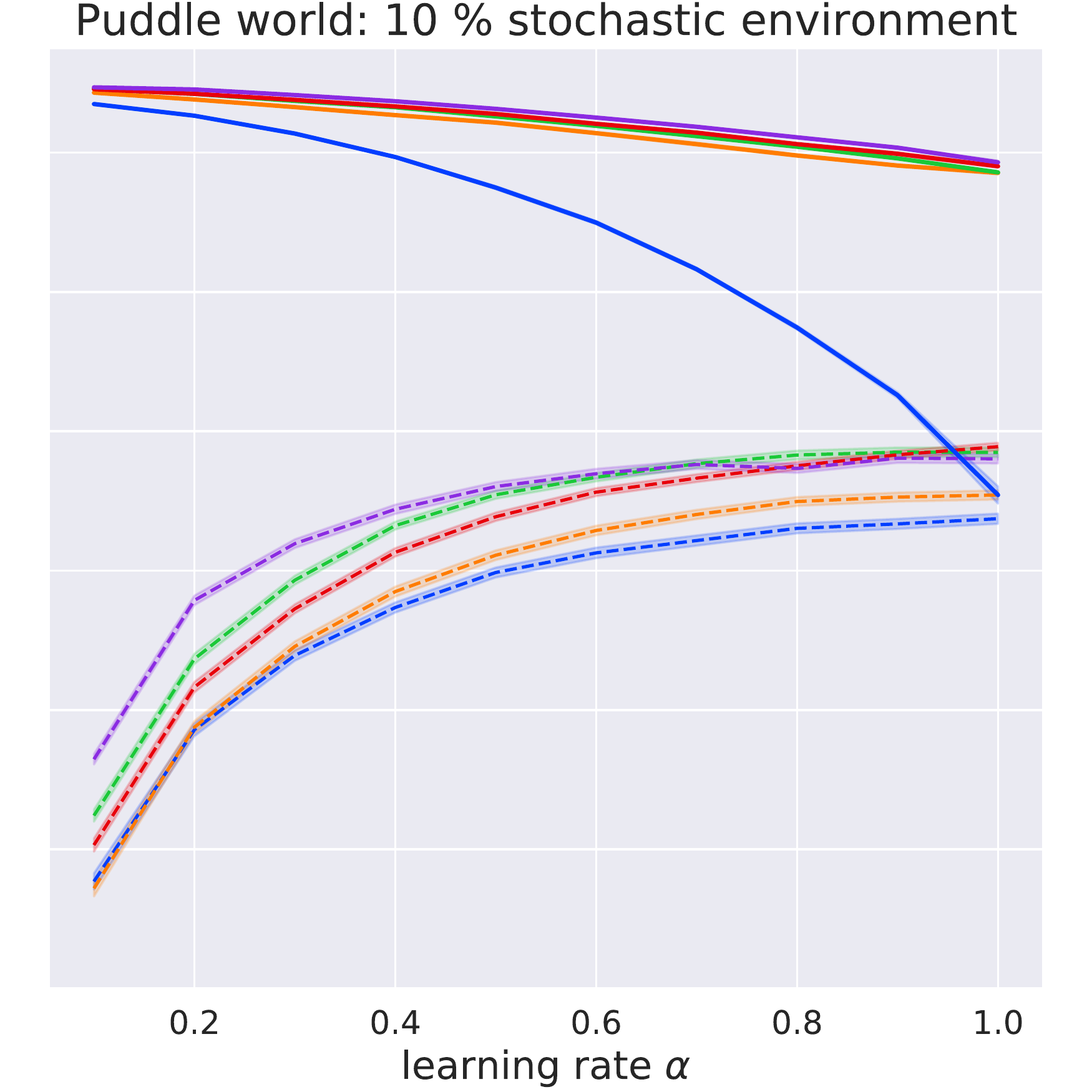}
}
\hspace{1em}%
\subfigure{\includegraphics[width=0.3\textwidth, trim={0.1cm 0cm 0.6cm 0},clip]{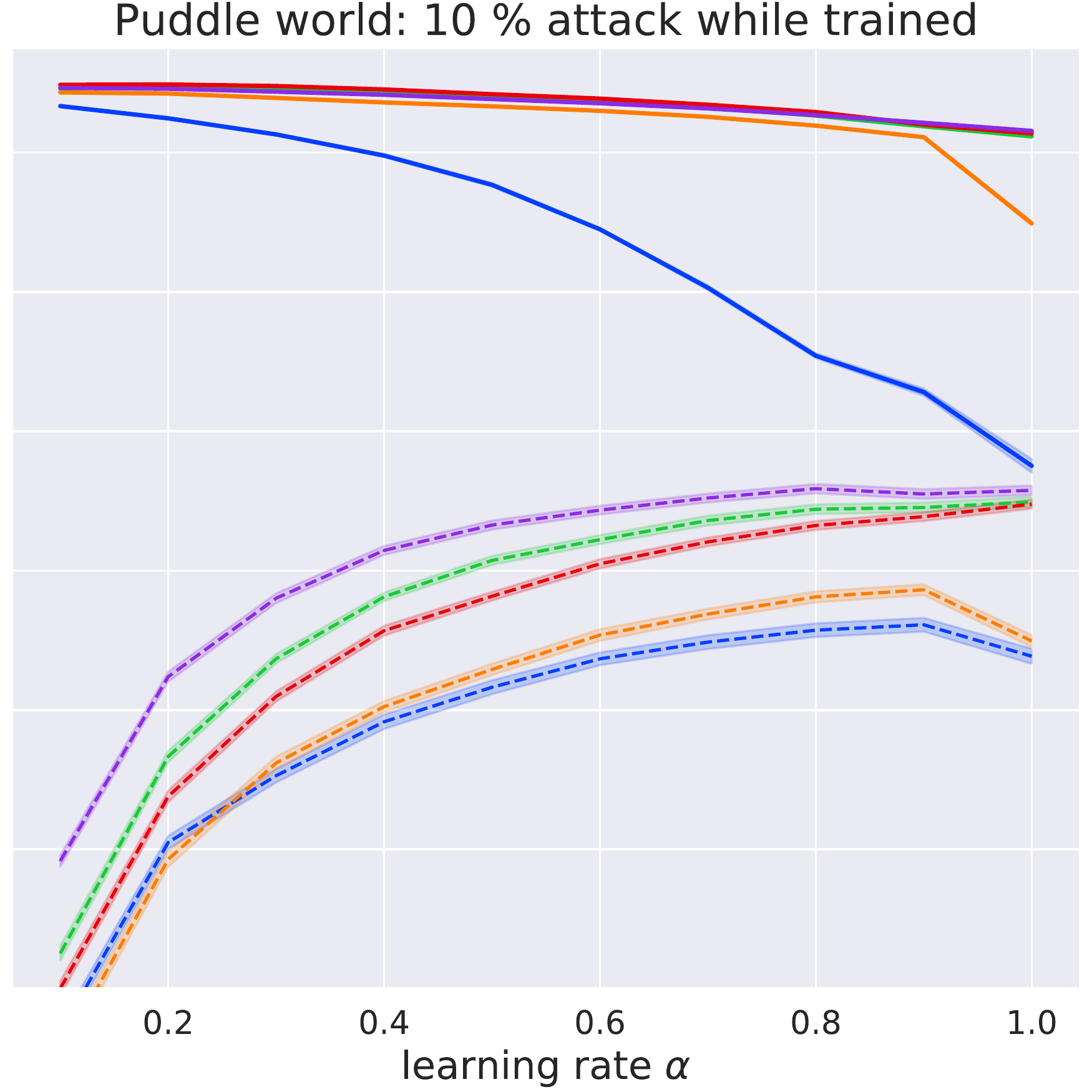}
}
\vsp{-1.7em}
\caption{The Cliff Walking (single-agent) in first row and the Puddle World (multi-agent) in second row. Deterministic environment (first column), 10 \% stochastic environment (second column) and 10 \% attack while training (third column). $\epsilon$-greedy policy with fixed $\epsilon=0.1$. Early performance - dashed lines ($100$ episodes), converged performance - solid lines ($100,000$ episodes).}
\label{fig:puddle_cliff_comparing}
\vsp{-1em}
\end{figure*}
Figure~\ref{fig:puddle_cliff_comparing} shows the results for three different settings of both scenarios: (i) a deterministic environment, where each action chosen by the policy is executed with certainty; (ii) an environment with $10 \%$ stochasticity, in which a random action is taken with $10 \%$ of the time; and (iii) an environment with $10 \%$ probability of attack, in which an adversarial action is taken $10 \%$ of the time. As before, we define an attack as an action that minimizes the Q-value in the given state. 
The stochastic environment can be seen as modelling random failures.

The early performance experiments are averaged over 300 trials and the converged performance experiments are averaged over 10 trials. We also show the $95 \%$ confidence intervals on all results. We fix the exploration rate to $\epsilon = 0.1$; for the $\ko$ methods we set $\kp=0.1$ (later in this section we also experiment with different settings of $\kp$). Note that the y-axis, showing the average return, is the same in each row for easy comparison. The x-axis shows different learning rates $\alpha$. We can see how the average return decreases with more complex scenarios, from deterministic, over to stochastic, to one with attacks. The $\ko$ methods are superior to the other baselines in the early performance experiments, especially in the attack case, which is the scenario the $\ko$ methods are designed for. In the converged performance experiments the $\ko$ methods beat Q-learning and SARSA and performs at least as well as Expected SARSA.



\subsection{Different Levels of Probability of Attack}

In this section we investigate how the methods behave under different levels of attack, defined by the probability of attack per state. We consider an attack on trained (converged) methods, thus we first train each method for $100,000$ episodes (in deterministic environment) and then we test it on $50,000$ trials with given probability of attack per state. We average the results over $10$ trials and provide $95 \%$ confidence intervals. Note, that this is a different methodology of testing the methods against an adversarial attack compared to the experiments in Figure~\ref{fig:puddle_cliff_comparing}, where we considered attacks during training. This experiment shows the strength of the $\ko$ methods for different levels of attacks. We assume the probability of attack to be known here and thus we set the parameter $\kp$ to be equal to that probability, which is the meaning of the parameter $\kp$ as described before. In other words, parameter $\kp$ prescribes how much safely we want to act. We consider very rare attacks ($0.001$ probability of attack in each state) to more frequent attacks ($0.2$ probability of attack in each state) as shown in Figure~\ref{fig:probs_attack}.
For better visualisation we use logarithmic axes. We train all the methods with fixed exploration rate $\epsilon=0.1$ and learning rate $\alpha = 0.1$, note that the methods (except SARSA) converge to the same result for different learning rates as shown in left panel of Figure~\ref{fig:puddle_cliff_comparing}. SARSA is very unstable for different learning rates (demonstrated by wide confidence intervals), learns different paths for different $\alpha$ and does not converge fast enough or not at all, which can be partly explained by its higher variance~\cite{vanseijen2009}. We test the different levels of probability of attack on the Cliff Walking experiment in the left panel of Figure~\ref{fig:probs_attack}, where we can see that the $\ko$ methods compare favourably to the other baselines, however in some parts they give similar performance as Expected SARSA or SARSA. The Cliff Walking experiment has a limited expressiveness for testing the methods due to a limited number of possible safe paths with low costs (see Figure~\ref{fig:cliff_walking_map}), which is the reason for the $\ko$ methods to show only similar performance compared to the baselines, not reaching their full potential. However, the Puddle World is more expressive, because there are several possible paths differing in level of safety and cost. The bigger solution space of the Puddle World is also induced by the two cooperating agents, each having their own action space. Therefore, on the right panel of Figure~\ref{fig:probs_attack} we show the Puddle World experiment for different levels of probability of attack. Here, we can clearly see the $\ko$ methods outperform the baselines, especially Q($\ko$) is superior over the whole range of considered probabilities of attack. Note that Q($\ko$) learns a safer path even for very rare attacks ($0.001$), which is also shown in Figure~\ref{fig:puddle_world_map}, where Q($\ko$) learns a path with the same cost (distance) compared to Q-learning, but further to the puddles.

\begin{figure}[tbp]
    \subfigure{
    \includegraphics[width=0.225\textwidth, trim={0 0 0 0},clip]{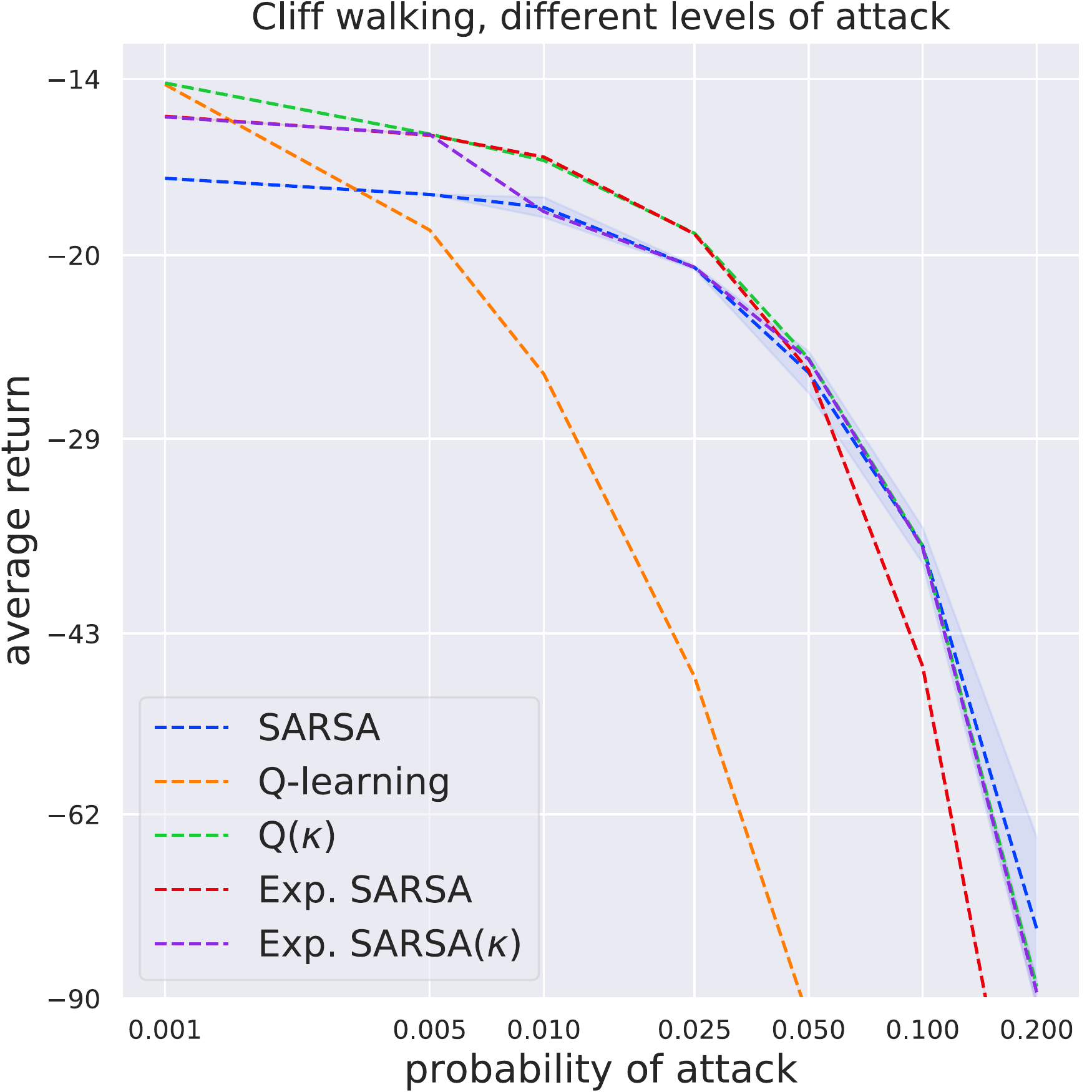}
    }
    \subfigure{
    \includegraphics[width=0.225\textwidth, trim={0 0 0 0},clip]{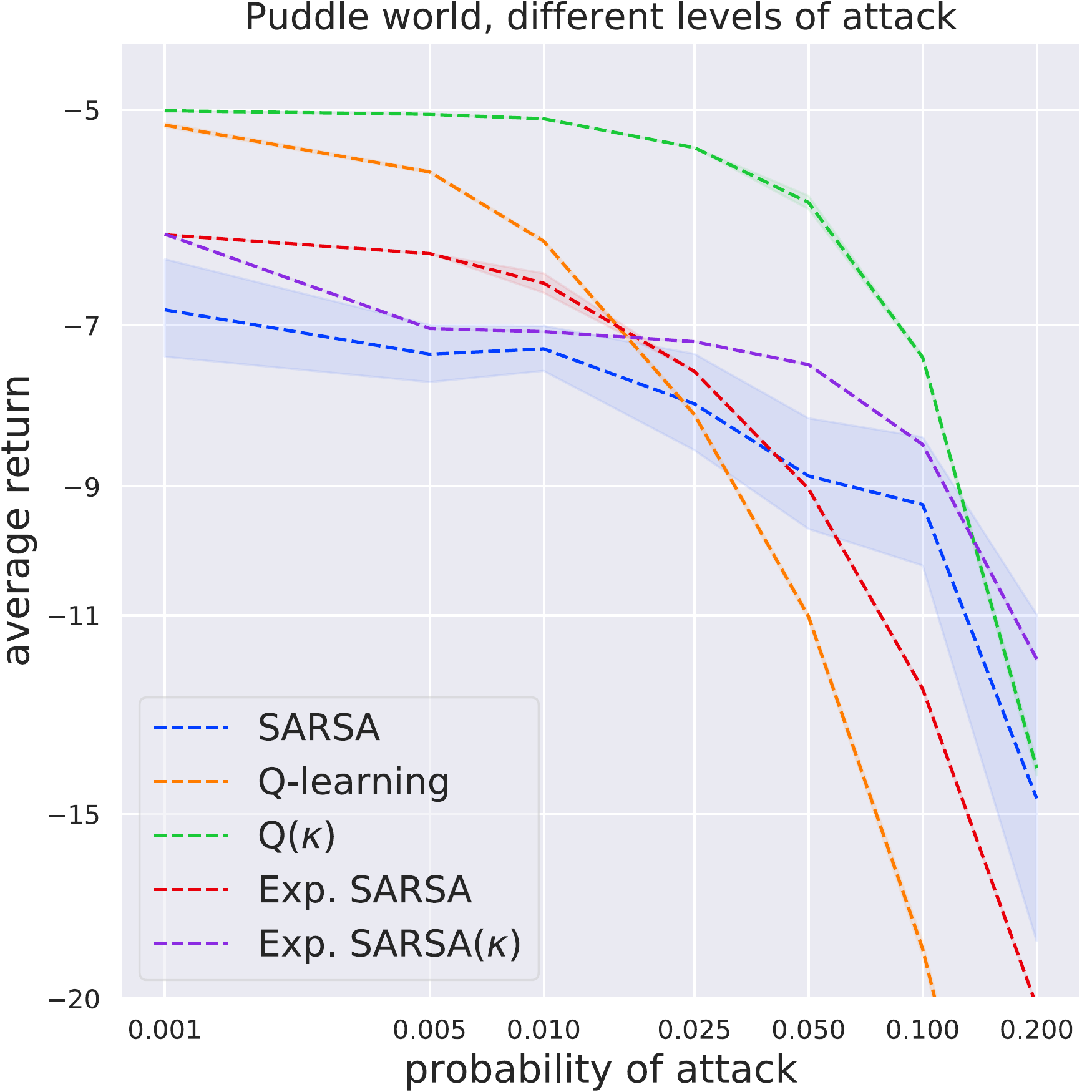}
    }
    \vsp{-1.7em}
    \caption{Varying probability of attack: Cliff Walking (left), Puddle World (right), trained 100k, test 50k, $\alpha = 0.1$, $\epsilon=0.1$.}
    \label{fig:probs_attack}
    \vsp{-1.5em}
\end{figure}

\subsection{Robustness Analysis}

We now test the robustness of the proposed algorithms to an incorrect attack model, meaning that the value of $\kp$ in Q($\ko$) and Expected SARSA($\ko$) no longer matches the actual probability of attack (in our previous experiments $\kp$ matched the actual probability of attack precisely). 
Figure~\ref{fig:robust} shows the performance of our algorithms for a range of actual attack probabilities (y-axis) while learning using a fixed parameter $\kp = 0.1$.
\begin{figure}[tbp]
    \subfigure{
    \includegraphics[width=0.225\textwidth, trim={0 0 0 0},clip]{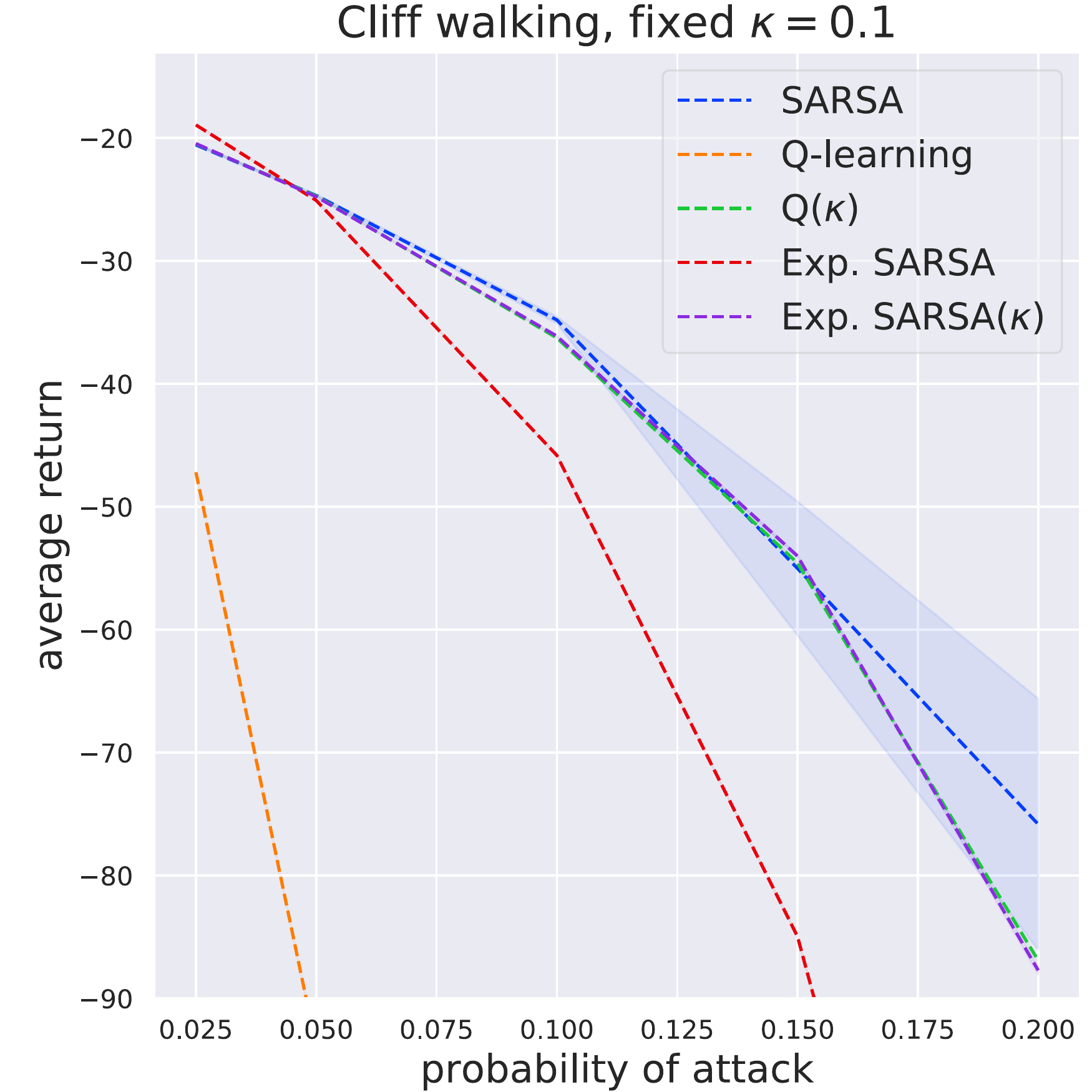}
    }
    \subfigure{
    \includegraphics[width=0.225\textwidth, trim={0 0 0 0},clip]{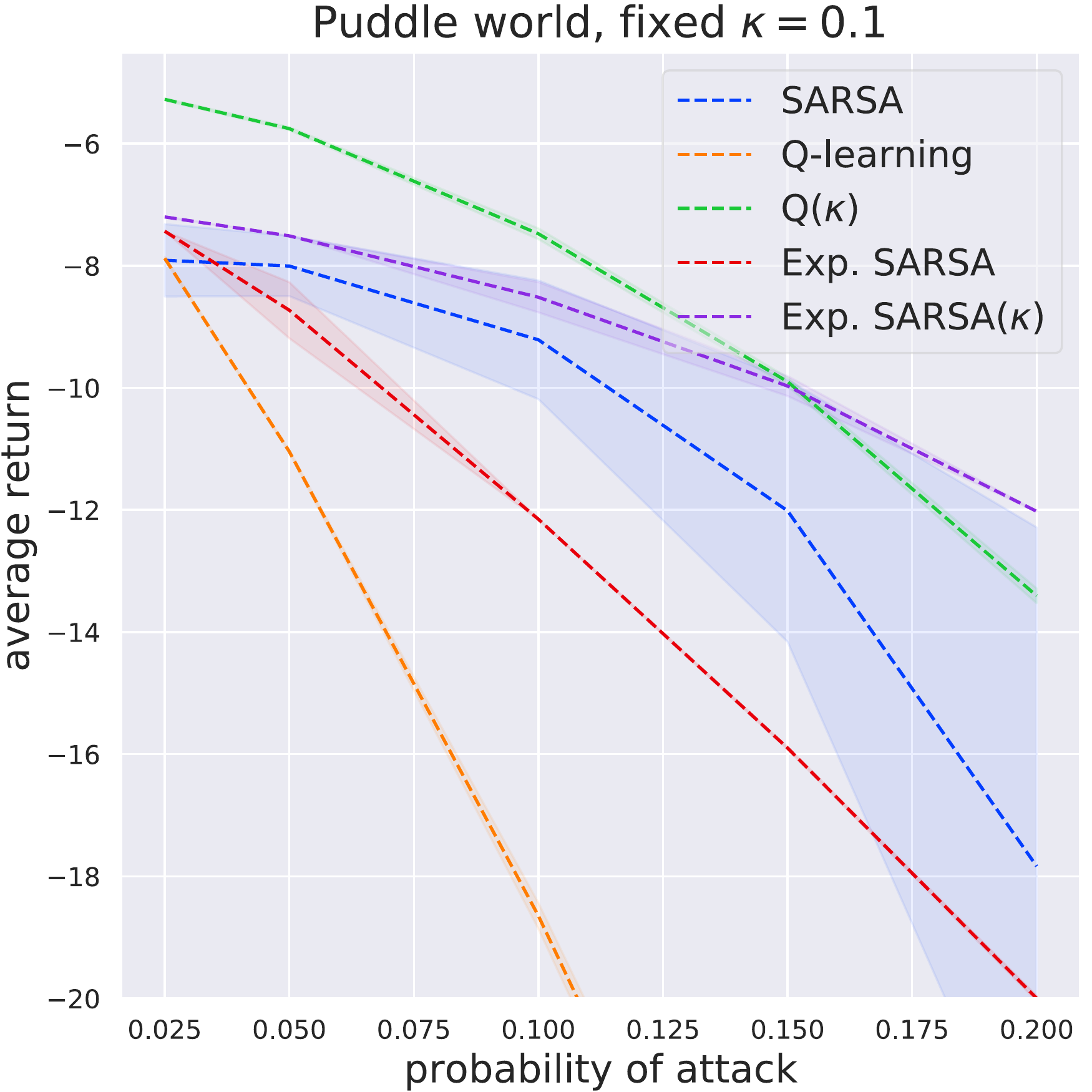}
    }
    \vsp{-1.7em}
    \caption{Robustness analysis: Cliff Walking (left), Puddle World (right), trained 100k, test 50k, $\alpha=0.1$, $\epsilon=0.1$, $\kp=0.1$.}
    \label{fig:robust}
    \vsp{-1.5em}
\end{figure}
To better highlight the robustness of our methods we choose a range of relatively high actual probabilities of attack around the fixed value of $\kp = 0.1$ (note that we no longer use a logarithmic scale). 
One can see that even when $\kp$ is not equal to the actual probability of attack the proposed $\ko$ algorithms still outperform the baselines in most cases. In the Cliff Walking experiment (Figure~\ref{fig:robust} left) the $\ko$ methods perform similar to SARSA, however SARSA is quite unstable, as discussed before and as one can see by the width of the confidence interval. The Puddle World experiment (Figure~\ref{fig:robust} right) demonstrates the superior performance of the $\ko$ methods, which beat all the baselines even for the fixed parameter $\kp$. These results show that even when we do not know the probability of attack accurately we can learn a more robust strategy using the $\ko$ methods.

\section{Discussion and Conclusion}\label{sec:discussion}

We presented a new operator $\ko$ for temporal difference learning, which improves robustness of the learning process against potential attacks or perturbations in control. We proved convergence of Q($\ko$) and Expected SARSA($\ko$) to 
(i) the optimal value function $Q^\star$ of the original MDP in the limit where $\kp \rightarrow 0$; and (ii) the optimal robust value function $Q^\star_{\ko}$ of the MDP that is generalized w.r.t. $\ko$ for constant parameter $\kp$, in both single- and multi-agent versions of the methods.
%
%
Our complementary empirical results demonstrated that the proposed $\ko$-methods indeed provide robustness against a chosen scenario of potential attacks and failures. 
Although our method assumes that a model of such attacks and failures is known to the agent, we further demonstrated that our methods are robust against small model errors. Moreover, we have shown that even in absence of attacks or failures, our method learns a policy that is robust in general against environment stochasticity, in particular in the early stages of learning.
%

There are several interesting directions for future work. The control space can be extended, allowing for more agents being attacked or malfunctioning with different intensity, or with control transitions depending on additional variables other than the state. 
%
Furthermore, the target of adversarial policies could be learned from experience using ideas from opponent modelling (e.g. DPIQN~\cite{hong2018}).
Our proposed operator $\ko$ can potentially be combined with some recent state-of-the-art reinforcement learning methods. For example, the operator could be combined with the multi-step Retrace($\lambda$)~\citep{munos2016safe} algorithm, potentially speeding up convergence. Mixed multi-step updates could be introduced by combination with Q($\sigma$)~\citep{deasis2017multi}, where the parameter $\sigma$ can also be state-dependent similarly to the control transitions in our model, allowing to learn robust policies against e.g. multi-step attacks. Another interesting extension along this line would be to model the control transition similar to the options framework~\citep{sutton1999options,bacon2017option}, in which case the alternate control policies could be seen as ``malicious'' options over which the agent has no control, with potentially complex initiation sets and termination conditions. Such extensions would further increase the flexibility of our proposed operator and narrow the reality gap, making it applicable to a wide range of real-world scenarios.

\begin{acks}
This project has received funding in the framework of the joint programming initiative ERA-Net Smart Energy Systems' focus initiative Smart Grids Plus, with support from the European Union's Horizon 2020 research and innovation programme under grant agreement No 646039. We are indebted to the anonymous reviewers of AAMAS 2019 for their valuable feedback.
\end{acks}

\clearpage

\bibliographystyle{ACM-Reference-Format}  
\balance
\DeclareRobustCommand{\VAN}[3]{#3}
\DeclareRobustCommand{\VON}[3]{#3}
\DeclareRobustCommand{\DE}[3]{#3}
\bibliography{references}  

\end{document}